\documentclass[11pt,a4paper,onecolumn,gdm]{google}
\usepackage[authoryear,sort&compress,round]{natbib}
\usepackage{placeins}
\usepackage{etoolbox}
\patchcmd{\maketitle}
  {\begin{adjustwidth}{0pt}{24pt}}
  {\begin{adjustwidth}{0pt}{0pt}}
  {}{\PackageError{VideoEncoders}{Could not widen the title block}{Check the google.cls title definition.}}
\renewcommand{\today}{2026-10-05}
\uselogo{}
\usepackage{amsmath}
\usepackage{amssymb}
\usepackage{amsthm}

\usepackage{amsmath,amsfonts,bm}

\def\eqref#1{equation~\ref{#1}}

\def\1{\bm{1}}

\DeclareMathAlphabet{\mathsfit}{\encodingdefault}{\sfdefault}{m}{sl}
\SetMathAlphabet{\mathsfit}{bold}{\encodingdefault}{\sfdefault}{bx}{n}

\usepackage{graphicx}
\usepackage{wrapfig}

\usepackage{booktabs}
\usepackage{makecell}
\usepackage{multirow}
\usepackage{adjustbox}

\newcommand{\oursbest}[1]{\cellcolor{black!6}\textbf{#1}}

\usepackage{pifont}      
\usepackage{amssymb}     
\usepackage{enumitem}

\newcommand{\cmark}{\ding{51}}
\newcommand{\xmark}{\ding{55}}

\usepackage{hyperref}
\usepackage{url}

\definecolor{IFRepresentation}{HTML}{1969AA}
\definecolor{IFAllocation}{HTML}{B96811}
\definecolor{IFInteraction}{HTML}{008579}
\definecolor{IFTheory}{HTML}{7050A0}
\newcommand{\IFrep}[1]{\textcolor{IFRepresentation}{#1}}
\newcommand{\IFalloc}[1]{\textcolor{IFAllocation}{#1}}
\newcommand{\IFinter}[1]{\textcolor{IFInteraction}{#1}}
\newcommand{\IFtheory}[1]{\textcolor{IFTheory}{#1}}
\hypersetup{colorlinks=true,linkcolor=IFTheory,citecolor=IFInteraction,
  urlcolor=IFRepresentation,pdftitle={Video Encoders Built on Image Representations},
  pdfauthor={Jusheng Zhang, Wenhao Wang, Longqi Cai, Liangzhe Yuan, Yuxiao Wang, Ming-Hsuan Yang}}

\newtheorem{IFproposition}{\IFtheory{Proposition}}

\definecolor{linkblue}{rgb}{0,0,0.5}
\hypersetup{
    colorlinks=true,
    citecolor=linkblue
}
\makeatletter
\newcounter{algorithm}
\renewcommand{\thealgorithm}{\arabic{algorithm}}

\newcommand{\fps@algorithm}{tbp}
\newcommand{\ftype@algorithm}{4}
\newcommand{\ext@algorithm}{loa}
\newcommand{\fnum@algorithm}{Algorithm~\thealgorithm}
\newenvironment{algorithm}[1][tbp]{\@float{algorithm}[#1]}{\end@float}
\makeatother

\title{Video Encoders Built on Image Representations}

\author[2,*]{Jusheng Zhang}
\author[3,*,$\dagger$]{Wenhao Wang}
\author[1]{Longqi Cai}
\author[1]{Liangzhe Yuan}
\author[1]{Yuxiao Wang}
\author[1,$\dagger$]{Ming-Hsuan Yang}
\affil[1]{Google DeepMind}
\affil[2]{Stanford University}
\affil[3]{Vast Intelligence Lab}
\affil[*]{Equal contribution}
\affil[$\dagger$]{Corresponding authors}
\correspondingauthor{Wenhao Wang (\href{mailto:wangwenhao@vastilab.com}{wangwenhao@vastilab.com}); Ming-Hsuan Yang (\href{mailto:minghsuan@google.com}{minghsuan@google.com}).\\
Google DeepMind authors are only contributing in an advisory capacity.}
\hypersetup{pdfauthor={Jusheng Zhang, Wenhao Wang, Longqi Cai, Liangzhe Yuan, Yuxiao Wang, Ming-Hsuan Yang}}

\begin{abstract}
The design of a video encoder determines when frames begin to interact and
which frame-specific visual evidence remains accessible to the language model.
Native video pathways couple neighboring frames during visual encoding, whereas
image pathways preserve independently computed frame representations but incur
a much larger visual-token cost when all image tokens are forwarded. We ask a
basic question: \textit{whether a compact video encoder can instead be built on image
representations}. To answer this question, we separate three operations that are
often coupled: per-frame representation, cross-frame token allocation, and
temporal interaction. A frozen image encoder first produces frame-specific
candidates. A question-aware selector then allocates a fixed token budget
across frames using relevance, diversity, and cross-frame correspondence, after
which a lightweight learned refiner reads neighboring-frame context and writes
residual updates only to the retained anchors. This preserves source positions
and keeps the visual output at the fixed budget. Across 13 benchmarks and three
vision--language backbones, the resulting pathway matches full-image aggregate
performance while using only about $28\%$--$35\%$ of its visual tokens.
Specifically, on Qwen3-VL-8B, it achieves a 13-benchmark macro-average of
$62.75$ with $1{,}535$ visual tokens, compared with $62.58$ for the full Image
pathway at $4{,}424$ tokens and $59.49$ for native Conv3D at $2{,}212$ tokens.
On Qwen3-VL-32B, it reaches a 13-benchmark macro-average of $66.28$, compared
with $66.09$ for Image, while providing a $2.16\times$ end-to-end speedup.
These results show that compact video encoding does not require early temporal
mixing: frame-specific evidence can be preserved first, allocated jointly, and
temporally contextualized after selection.

\end{abstract}

\begin{document}
\maketitle

\section{Introduction}
\label{sec:intro}

A video may show the same scene repeatedly, while a question depends on a
brief change in a small region. For instance, in Fig.~\ref{fig:encoding_pathways},
repeated background views cannot substitute for the instant of take-off or
contact at landing. A useful representation must retain the observations
that distinguish these states without spending its budget on repeated
evidence. Therefore, \emph{making a representation compact} and
\emph{keeping its decisive evidence accessible} are not the same operation.

Native video pathways introduce temporal interaction inside the visual
encoder through spatiotemporal convolutions or attention
\citep{tran2015learning,carreira2017quo,bertasius2021space}.
In the Conv3D considered here, neighboring frames contribute to shared
features before question-aware selection. The resulting sequence is compact,
but the selector acts on temporally coupled tokens rather than separately
addressable image tokens from each frame. A brief contact change may therefore
be harder to isolate from neighboring states. The concern is not that Conv3D
inevitably erases motion, but that \IFrep{frame-specific evidence is no longer
independently selectable}.

In contrast, independent image encoding exposes those frame-specific candidates,
but forwarding them all leaves relevance and redundancy unresolved. Repeated
backgrounds, persistent objects, and overlapping views occupy token slots
alongside observations needed for the question. The language model must then
process a much longer sequence and identify the useful evidence within it.
Full-image inference uses roughly twice the native video token count in our
setting. \IFalloc{More input does not itself identify what matters}: retaining
the dense candidate pool is not the same as allocating its contents according
to the question.

{\IFrep{The representation--budget question.}}
Can we retain frame-specific image evidence under a compact budget, without
either mixing frames before selection or forwarding a redundant image
sequence?

We address this question with an \emph{image-first} pathway summarized in
Fig.~\ref{fig:overview}(a). A frozen image encoder first represents frames
independently. A question-aware selector then allocates a fixed token budget
across the video using relevance, diversity, and adjacent-frame
correspondence. Here, \emph{key evidence} means evidence that contributes to
answering the question together with the other retained observations, not
simply the largest feature change or highest individual score. Two
similar-looking observations may both be needed to establish a transition, so
the selector coordinates across frames without averaging their embeddings.
Only afterward does an optional lightweight refiner read neighboring-frame
context and write one residual update per retained anchor. Source positions and
intermediate features remain aligned, and the output remains at the same fixed
token budget.

\begin{figure}[t]
  \centering
  \includegraphics[width=\linewidth]{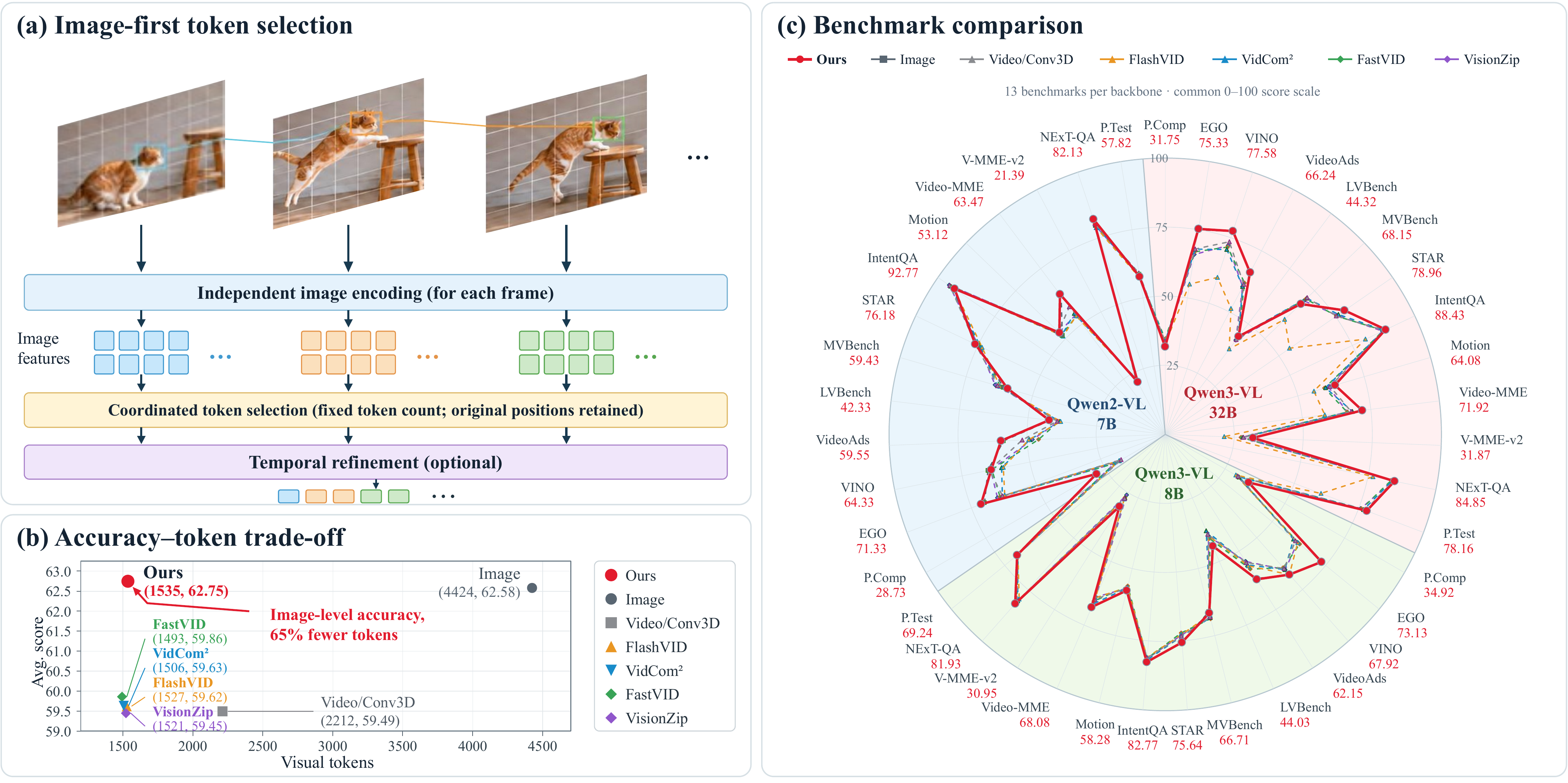}
  \vspace{-6mm}
  \caption{\textbf{Image-first video encoding.}
(a)~\IFrep{Independent encoding} preserves frame-specific candidates;
\IFalloc{coordinated selection} allocates the token budget; and
\IFinter{optional refinement} adds temporal context without extra output tokens.
(b--c)~Results compare complete pathways.}
  \label{fig:overview}
  \vspace{-4mm}
\end{figure}

This ordering separates \IFrep{representation}, \IFalloc{allocation}, and
\IFinter{interaction}. It also makes the limits of selective compression
explicit. A conditional KL decomposition distinguishes evidence made
unavailable by selection from error in using the evidence that remains: a
larger refiner cannot recover information it cannot observe. A complementary
fixed-operator analysis identifies the temporal interactions omitted when
refinement cannot access unselected context. Together, these results motivate
preserving candidates before compression and allowing selected anchors to read
additional temporal context without returning the full image sequence to the
language model. They do not imply that later interaction is universally
better.

Empirically, this decomposition produces a strong accuracy--budget trade-off.
Across 13 benchmarks and three vision--language backbones, the image-first
pathway matches full-image aggregate performance while using only about
$28\%$--$35\%$ of its visual tokens. On Qwen3-VL-8B, it reaches a
13-benchmark macro-average of $62.75$ with $1{,}535$ visual tokens, compared
with $62.58$ for the full Image pathway at $4{,}424$ tokens. On
Qwen3-VL-32B, it reaches $66.28$ versus $66.09$ for Image while providing a
$2.16\times$ end-to-end speedup. Fig.~\ref{fig:overview}(b--c) summarizes
the accuracy--token and cross-benchmark comparisons.

Image and video pathways also differ in positions and intermediate streams
\citep{wang2024qwen2vl,bai2025qwen3vl}; we therefore preserve the image source
interface rather than attributing complete-pipeline differences to token count
alone. The resulting design principle is to \emph{preserve candidates before
selection, allocate the compact budget jointly, and add temporal context where
needed}, while keeping the output budget fixed. Our contributions are:
\begin{itemize}[leftmargin=*,itemsep=0pt]
\vspace{-2mm}
\item We formulate compact video encoding by separating per-frame
representation, cross-frame token allocation, and temporal interaction, and
derive analyses that distinguish missing evidence from prediction error and
isolate interaction lost across the selection boundary.

\item We introduce an image-first pathway that combines frozen framewise
encoding with question-aware coordinated selection and a lightweight learned
temporal refiner, preserving source alignment while keeping the visual output
at a fixed token budget.

\item We evaluate the resulting design across 13 video-understanding benchmarks
and three vision--language backbones, showing that it matches full-image
aggregate performance with about $28\%$--$35\%$ of the visual tokens while
providing substantial inference savings.
\end{itemize}
\section{Problem Formulation}
\label{sec:problem}

Given a sampled video $\mathcal V=(x_1,\ldots,x_T)$, a question $q$, and a
budget $B$, we seek a compact input that retains useful frame-specific
evidence. Native Conv3D couples neighboring observations before selection;
Full Image retains every candidate without a question-specific output
allocation. We instead \IFalloc{allocate $B$ slots to relevant, complementary
source evidence}, keeping \IFinter{additional temporal context} separately
accessible (Fig.~\ref{fig:encoding_pathways}).

\begin{figure}[t]
  \centering
  \includegraphics[width=\linewidth]{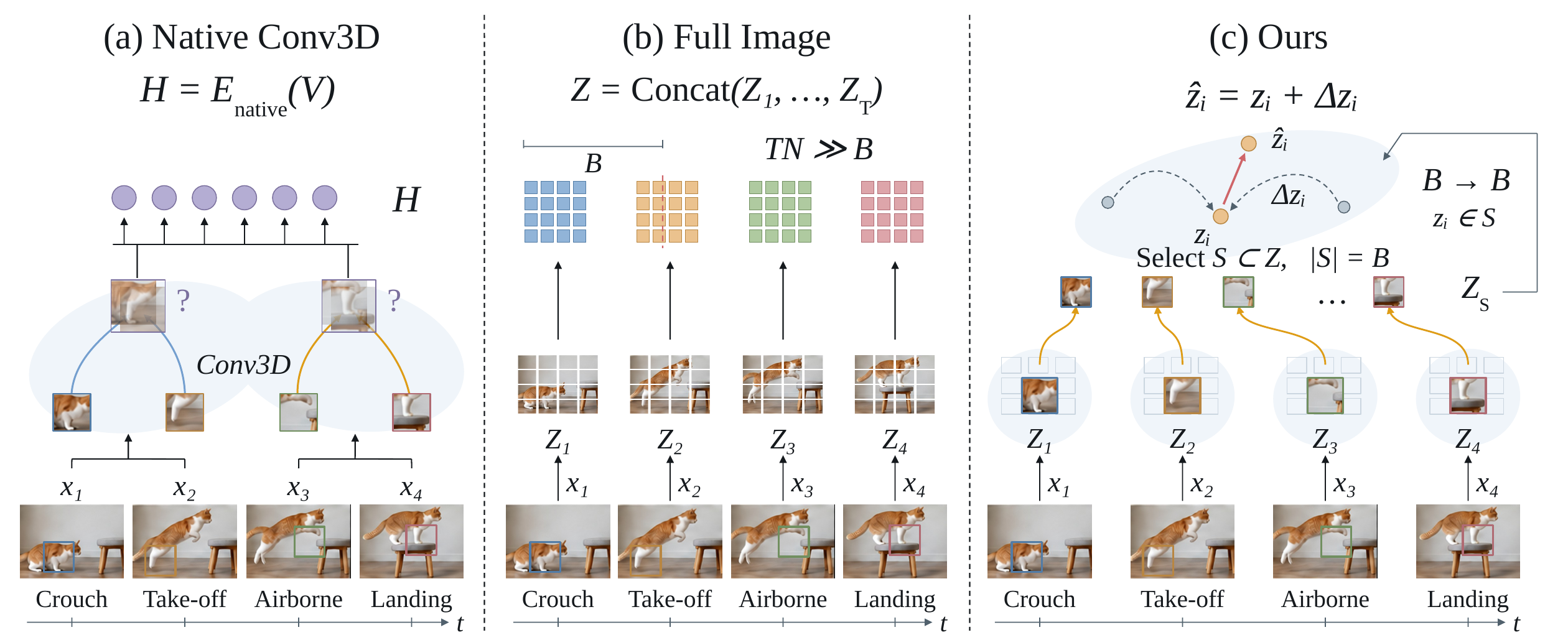}
  \vspace{-6mm}
  \caption{\textbf{Three video-input pathways.}
(a)~Conv3D mixes neighboring frames before selection.
(b)~Full Image preserves frame-specific tokens but forwards the dense sequence.
(c)~Ours selects \IFalloc{$B$ source anchors} before
\IFinter{temporal refinement}, preserving $B\!\to\!B$.
Highlighted patches indicate source support only.}
  \label{fig:encoding_pathways}
  \vspace{-4mm}
\end{figure}

\subsection{Source-Aligned Candidates and a Budgeted Interface}
\label{sec:source_interface}

A frozen image encoder produces
$\IFrep{\mathbf Z_t=E_{\mathrm{img}}(x_t)}$ with $N_t$ tokens of dimension
$D$. Concatenation gives
$\mathbf Z=\operatorname{Concat}(\mathbf Z_1,\ldots,\mathbf Z_T)$ with
$M=\sum_t N_t$ rows. Computing $\mathbf Z_t$ uses no other sampled frame;
this is computational separation, not statistical independence of frames.
The native pathway instead produces
$\mathbf H=E_{\mathrm{native}}(\mathcal V)$ with $L_{\mathrm{native}}$ rows.
We focus on $B\leq L_{\mathrm{native}}<M$: the dense image sequence is a
\IFrep{pool of selectable observations}, not the intended language-model
input. Each candidate carries an embedding $\mathbf z_i$, a source position
$p_i=(\tau_i,u_i,v_i)$, and intermediate features $d_i$; $\mathbf P$ and
$\mathbf D$ collect the associated streams.

\paragraph{\IFrep{Source-aligned selection.}}
A selector returns $B$ distinct indices $S$ and applies the same row-selection
matrix $P_S$ to every stream:
\vspace{-1mm}
\begin{equation}
  \IFalloc{\mathbf Z_S=P_S\mathbf Z},\qquad
  \IFrep{\mathbf P_S=P_S\mathbf P},\qquad
  \IFrep{\mathbf D_S=P_S\mathbf D},\qquad
  \IFalloc{|S|=B}.
  \label{eq:source_selection}
\end{equation}
\vspace{-1mm}
The mask $S=S(\mathcal V,q)$ may depend on the whole video, but selection
changes neither the retained embeddings nor their source metadata. Optional
refinement subsequently writes
$\widehat{\mathbf Z}_S=\mathbf Z_S+\Delta_\theta\in\mathbb R^{B\times D}$
while leaving $\mathbf P_S$ and $\mathbf D_S$ unchanged.
\emph{Source alignment identifies the output anchor; it does not restrict
refined content to one frame.}

\subsection{Separating Missing Evidence from Prediction Error}
\label{sec:information_decomposition}

Discarding a repeated background and discarding the only visible landing
contact both save tokens, but need not remove the same predictive evidence.
To distinguish missing evidence from difficulty using what remains, fix a
distribution over $(\mathcal V,q)$ and let
$p_F=p_{\mathrm{img}}(\cdot\mid\mathcal V,q)$ be the frozen full-image
pathway's first-answer-token distribution. Without extra context, the
available record is
$\mathcal O=(q,S,\mathbf Z_S,\mathbf P_S,\mathbf D_S)$.
The mask $S$ is included because adaptive selection can itself convey
information. For any predictor $p_\theta(\cdot\mid\mathcal O)$ measurable
with respect to this record, define
$\bar p_{\mathcal O}=\mathbb E[p_F\mid\mathcal O]$.

\begin{IFproposition}[A conditional prediction-error decomposition]
\label{prop:information}
For a finite output vocabulary and finite expected KL divergence,
\vspace{-1mm}
\begin{equation}
  \mathbb E\,D_{\mathrm{KL}}(p_F\|p_\theta)
  =
  \IFalloc{
  \underbrace{
    \mathbb E\,D_{\mathrm{KL}}(p_F\|\bar p_{\mathcal O})
  }_{\mathcal I(\mathcal O):\ \text{unavailable evidence}}
  }
  +
  \IFinter{
  \underbrace{
    \mathbb E\,D_{\mathrm{KL}}(\bar p_{\mathcal O}\|p_\theta)
  }_{\mathcal A_\theta(\mathcal O):\ \text{prediction error}}
  }.
  \label{eq:kl_decomposition}
  \vspace{-1mm}
\end{equation}
Thus $\mathcal I(\mathcal O)$ is the minimum expected divergence over all
predictors observing $\mathcal O$.
\end{IFproposition}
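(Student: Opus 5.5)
The decomposition is a Pythagorean identity for KL divergence with respect to conditional expectation. We work pointwise in the answer token $y$ over the finite vocabulary. Write
\[
\log\frac{p_F(y)}{p_\theta(y)}=\log\frac{p_F(y)}{\bar p_{\mathcal O}(y)}+\log\frac{\bar p_{\mathcal O}(y)}{p_\theta(y)}.
\]
Multiply by $p_F(y)$, sum over $y$, and take expectations. The first term gives exactly $\mathcal I(\mathcal O)=\mathbb E\,D_{\mathrm{KL}}(p_F\|\bar p_{\mathcal O})$. For the second term, the factor $\log(\bar p_{\mathcal O}(y)/p_\theta(y))$ is $\mathcal O$-measurable, since both $\bar p_{\mathcal O}$ and $p_\theta(\cdot\mid\mathcal O)$ are functions of the record. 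The tower property then lets us replace $p_F(y)$ by $\mathbb E[p_F(y)\mid\mathcal O]=\bar p_{\mathcal O}(y)$. This turns the second term into $\mathbb E\sum_y\bar p_{\mathcal O}(y)\log(\bar p_{\mathcal O}(y)/p_\theta(y))=\mathcal A_\theta(\mathcal O)$, which proves \eqref{eq:kl_decomposition}.

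The main obstacle is justifying the splitting and the tower step when the log-ratios may be infinite or non-integrable.

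\textbf{Support.} First I would show that $\bar p_{\mathcal O}(y)=0$ implies $p_F(y)=0$ almost surely. This holds because $p_F(y)\ge0$ has conditional expectation zero. Hence the first log-ratio is finite $p_F$-a.s., with the convention $0\log 0=0$.

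\textbf{Finiteness.} Finite $\mathbb E\,D_{\mathrm{KL}}(p_F\|p_\theta)$ forces $p_\theta(y)>0$ wherever $p_F(y)>0$, almost surely.

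\textbf{Integrability.} To avoid $\infty-\infty$, I would note that both summands on the right are nonnegative, so $\mathcal I(\mathcal O)$ and $\mathcal A_\theta(\mathcal O)$ are well defined in $[0,\infty]$. I would then verify the identity first for the truncated predictor
\[
p_\theta^{\epsilon}=(1-\epsilon)p_\theta+\epsilon\,u,
\]
where $u$ is uniform. For this predictor the log-ratio $\log(\bar p_{\mathcal O}/p_\theta^\epsilon)$ is bounded above, so each step is justified by integrability of the bounded part, with nonnegativity controlling the rest. Finally I would let $\epsilon\to0$. Convexity of KL gives $D_{\mathrm{KL}}(p\|p_\theta^\epsilon)\le(1-\epsilon)D_{\mathrm{KL}}(p\|p_\theta)+\epsilon D_{\mathrm{KL}}(p\|u)$, and lower semicontinuity gives the reverse bound in the limit. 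Together these pass the identity to $\epsilon=0$. The vocabulary is finite, so $D_{\mathrm{KL}}(\cdot\|u)\le\log|\mathcal Y|$ keeps everything finite.

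\textbf{Minimality.} The final sentence of the proposition follows immediately. Since $\mathcal A_\theta(\mathcal O)\ge0$ by Gibbs' inequality, every $\mathcal O$-measurable predictor satisfies $\mathbb E\,D_{\mathrm{KL}}(p_F\|p_\theta)\ge\mathcal I(\mathcal O)$. Equality is attained by the admissible predictor $p_\theta=\bar p_{\mathcal O}$, which is a valid distribution because conditional expectation preserves nonnegativity and normalization. Therefore $\mathcal I(\mathcal O)$ is the minimum.
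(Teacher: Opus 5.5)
Your proposal is correct and follows the paper's proof. It uses the same observation that coordinates with $\bar p_{\mathcal O}(y)=0$ have $p_F(y)=0$ almost surely, and the same step of pulling the $\mathcal O$-measurable factor $\log(\bar p_{\mathcal O}(y)/p_\theta(y))$ through the conditional expectation so that $p_F(y)$ becomes $\bar p_{\mathcal O}(y)$. The minimality argument, with $p_\theta=\bar p_{\mathcal O}$ attaining equality, is also the same. Your one addition is the $\epsilon$-mixture with the uniform distribution, followed by the convexity/Fatou limit. That supplies the integrability justification the paper leaves implicit behind its finite-expected-KL hypothesis, and it in fact yields the identity in $[0,\infty]$ even without that hypothesis.
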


The first term concerns \IFalloc{which evidence is accessible}: two videos
with identical selected records but different unobserved contact states may
induce different teacher predictions. No predictor of that record can
distinguish them. The second concerns \IFinter{how available evidence is used}:
a frozen language model with a small refiner may not realize the best
predictor even when the clue is retained. More capacity cannot remove the
first limitation. Teacher agreement is not answer correctness.

Reading context $\mathcal C$ expands the record to
$\mathcal O^+=(\mathcal O,\mathcal C)$, including all retrieved indices,
features, and other accessed information. These nested records satisfy
\vspace{-1mm}
\begin{equation}
  \mathcal I(\mathcal O)
  =
  \mathcal I(\mathcal O^+)
  +
  \mathbb E\,D_{\mathrm{KL}}
  (\bar p_{\mathcal O^+}\|\bar p_{\mathcal O}),
  \label{eq:context_information_identity}
  \vspace{-1mm}
\end{equation}
and therefore
$\IFalloc{\mathcal I(\mathcal O^+)\leq\mathcal I(\mathcal O)}$.
Additional context cannot increase the information floor, although a learned
predictor can still perform worse. Crucially, this context need not consume
output token slots. Appendix~\ref{app:information} gives the identity and
proofs.

\subsection{When Can Temporal Interaction Be Postponed?}
\label{sec:noncommutation}

A selected contact patch may need neighboring-frame evidence that received no
output slot. To expose this interaction gap, consider
$F_A(\mathbf Z)=(I+A)\mathbf Z$ with fixed
$A\in\mathbb R^{M\times M}$. For index sets
$R,Q\subseteq\{1,\ldots,M\}$, write $ A_{RQ}=P_RAP_Q^\top$ 
and let $\bar S$ denote the unselected indices. This diagnostic model does
not assert that the native backbone is linear or equals
$F_A\circ E_{\mathrm{img}}$.

\begin{IFproposition}[The interaction omitted by selection]
\label{prop:cut}
Fix $A$ and $S$. The two operation orders differ by
\begin{equation}
\vspace{-2mm}
  \underbrace{P_S(I+A)\mathbf Z}_{\text{interact, then select}}
  -
  \underbrace{(I_B+A_{SS})\mathbf Z_S}_{\text{select, then interact}}
  =
  \IFinter{A_{S\bar S}\mathbf Z_{\bar S}}.
  \label{eq:cut_identity}
\end{equation}
\vspace{-1mm}
They agree for every $\mathbf Z$ if and only if
$A_{S\bar S}=0$.
\end{IFproposition}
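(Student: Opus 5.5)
The plan is to use the block decomposition induced by the selection. Let $P_S\in\mathbb R^{B\times M}$ and $P_{\bar S}\in\mathbb R^{(M-B)\times M}$ be the row-selection matrices for $S$ and its complement. Since $S$ and $\bar S$ partition $\{1,\ldots,M\}$, the rows of these two matrices together form the rows of a permutation matrix. This gives the resolution of identity $P_S^\top P_S+P_{\bar S}^\top P_{\bar S}=I_M$, together with $P_SP_S^\top=I_B$ and $P_SP_{\bar S}^\top=0$. Establishing this identity is the only ingredient needed; it holds because each column of the identity is a standard basis vector $e_i$ with $i$ in exactly one of $S,\bar S$.

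For the identity in~\eqref{eq:cut_identity}, expand the left term as $P_S(I+A)\mathbf Z=P_S\mathbf Z+P_SA\mathbf Z$. Insert the resolution of identity between $A$ and $\mathbf Z$:
\[
P_SA\mathbf Z=P_SAP_S^\top P_S\mathbf Z+P_SAP_{\bar S}^\top P_{\bar S}\mathbf Z=A_{SS}\mathbf Z_S+A_{S\bar S}\mathbf Z_{\bar S},
\]
using the notation $A_{RQ}=P_RAP_Q^\top$ and $\mathbf Z_R=P_R\mathbf Z$. Since $P_S\mathbf Z=\mathbf Z_S$, the left term equals $(I_B+A_{SS})\mathbf Z_S+A_{S\bar S}\mathbf Z_{\bar S}$. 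Subtracting the select-then-interact term yields exactly $A_{S\bar S}\mathbf Z_{\bar S}$.

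For the characterization, the direction ``if'' is immediate: if $A_{S\bar S}=0$, the difference vanishes for every $\mathbf Z$. For ``only if'', suppose $A_{S\bar S}\mathbf Z_{\bar S}=0$ for all $\mathbf Z\in\mathbb R^{M\times D}$. The map $\mathbf Z\mapsto P_{\bar S}\mathbf Z$ is surjective onto $\mathbb R^{(M-B)\times D}$; concretely, given any $\mathbf W$, take $\mathbf Z=P_{\bar S}^\top\mathbf W$, so that $P_{\bar S}\mathbf Z=\mathbf W$. Then $A_{S\bar S}\mathbf W=0$ for all $\mathbf W$. Choosing $\mathbf W$ with a single nonzero column equal to each standard basis vector $e_j$ shows every column of $A_{S\bar S}$ is zero, which requires $D\ge1$. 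The degenerate case $\bar S=\emptyset$ is vacuous, with both sides trivially zero.

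No step is a real obstacle. The only point needing care is stating the resolution of identity correctly for row-selection matrices and handling the empty-complement edge case.
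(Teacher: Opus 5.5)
Your proof is correct and matches the paper's: both insert the resolution of identity $P_S^\top P_S+P_{\bar S}^\top P_{\bar S}=I_M$ to split $P_SA\mathbf Z$ into $A_{SS}\mathbf Z_S+A_{S\bar S}\mathbf Z_{\bar S}$. The converse in both is obtained by testing with matrices that have a single nonzero feature column. Your explicit surjectivity step $\mathbf Z=P_{\bar S}^\top\mathbf W$ and your remark on the empty complement are harmless extra care.
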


Both orders produce $B$ rows; their difference is
\IFinter{interaction across the selection boundary}, not output length.
Reading context $C\subseteq\bar S$ restores $A_{SC}\mathbf Z_C$. For the
remaining unobserved indices $U=\bar S\setminus C$, the exact worst-case
omitted-interaction residual is
\vspace{-1mm}
\begin{equation}
  \sup_{\|\mathbf Z_U\|_F\leq R}
  \|A_{SU}\mathbf Z_U\|_F
  =
  \IFtheory{R\|A_{SU}\|_2}.
  \label{eq:cut_minimax}
  \vspace{-1mm}
\end{equation}
This statement fixes the operator, selection set, retrieved context, and
observed rows; it is not an accuracy bound for an adaptive neural encoder.
Appendix~\ref{app:cut} gives the proof.

The design must therefore avoid two different mistakes: spending output
slots on repeated observations, and making a selected observation unusable
by cutting off its relevant context. \IFalloc{Selection determines which
source evidence occupies the compact sequence}; \IFinter{refinement controls
which additional temporal evidence can inform it}. Neither task requires
mixing frames when the image candidates are first formed.

\section{Method}
\label{sec:method}

The formulation above suggests two design requirements: spend a fixed output
budget on useful source-aligned evidence, and allow retained anchors to access
relevant temporal context beyond the selected set without expanding that
output. The first controls \emph{which evidence survives the budget}, as
captured by the information floor in Proposition~\ref{prop:information}; the
second controls \emph{which interactions remain accessible after selection},
as exposed by Proposition~\ref{prop:cut}. Cross-frame information may therefore
guide allocation without modifying the frame-specific candidates themselves,
while additional temporal context can be introduced after the source anchors
have been chosen.

We instantiate these requirements with an \emph{image-first} pathway.
\IFrep{Independent encoding} keeps frame-specific candidates separately
addressable; \IFalloc{coordinated selection} allocates the budget to relevant,
non-repeated, and complementary evidence; and \IFinter{optional refinement}
writes local temporal context into the retained anchors without restoring the
dense output. The image encoder, language model, and selector remain frozen;
only the refiner is learned. Fig.~\ref{fig:interaction_order} shows this
ordering, and Algorithm~\ref{alg:image_first} (Appendix) summarizes inference.

\begin{figure}[t]
  \centering
  \includegraphics[width=\linewidth]{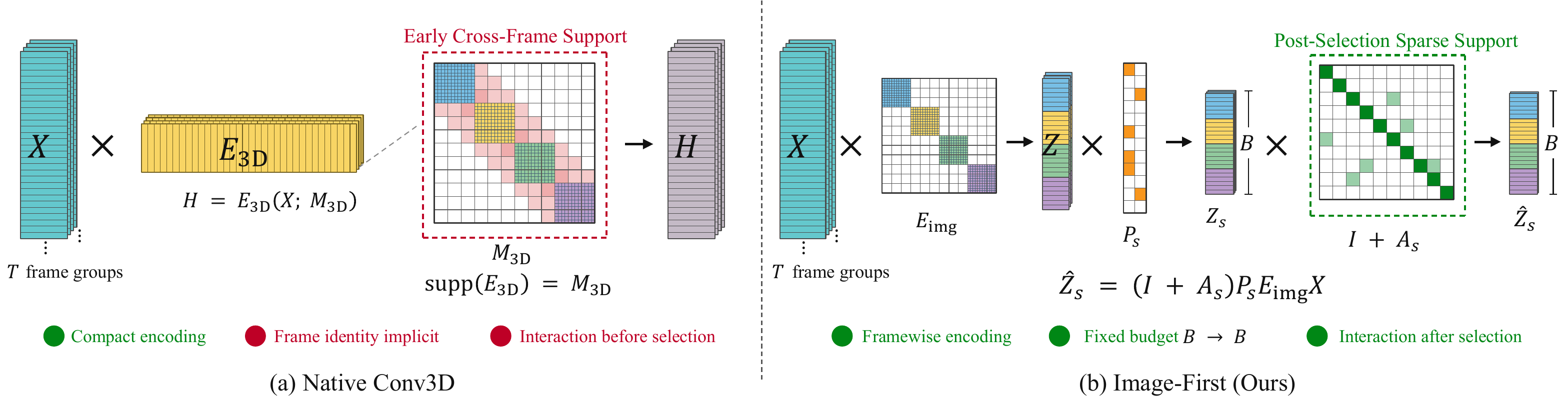}
  \vspace{-6mm}
  \caption{\textbf{Selection before temporal refinement.}
Left: native encoding mixes neighboring frames before selection.
Right: \IFrep{framewise candidates} remain separate through
\IFalloc{budgeted selection}, followed by \IFinter{sparse residual interaction}.
The displayed $(I_B+A_{SS})\mathbf Z_S$ uses selected-token context; the
refiner may also read unselected neighbors. Matrix blocks indicate
dependencies, not linearity.}
  \label{fig:interaction_order}
  \vspace{-4mm}
\end{figure}
\vspace{-1mm}
\subsection{Preserve Frame-Specific Candidates Before Selection}
\label{sec:independent_representation}

Independent encoding keeps observations from different frames separately
addressable until selection. In the running example, take-off and landing
patches remain distinct candidates rather than entering selection only through
a joint neighboring-frame feature. This preserves the choice of which image
features to retain; it is not a claim of lossless pixel encoding.
In our Qwen3-VL instantiation, each candidate carries its final embedding,
source MRoPE position, and associated DeepStack features
\citep{bai2025qwen3vl}. The image and video pathways share the pretrained
visual model, but differ in how temporal content enters their visual
representations. Selection applies the same retained indices to every image
stream, without pooling embeddings or assigning new coordinates. Unselected
features can remain accessible until refinement finishes: the
\IFrep{candidate pool} need not be the \IFalloc{language-model input}. The point is not to remove temporal information, but to postpone where it
enters: selection acts on source-aligned image candidates, while cross-frame
content is written only after the budgeted anchors are fixed.

\subsection{Select Question-Relevant and Complementary Evidence}
\label{sec:coordinated_selection}

A useful subset is not merely a collection of individually salient tokens.
Question relevance identifies what to look for, diversity limits repeated
evidence, and correspondence preserves observations whose value comes from
being seen together. Because the conditional distributions in
Proposition~\ref{prop:information} are unavailable at inference, we use frozen
surrogates. Let $\bar{\mathbf z}_i$ denote the mean representation over
nearby-frame correspondences and $m_i(q)\in[-1,1]$ the question-relevance
score. Our guiding preference is
\begin{equation}
\vspace{-1mm}
  \Phi(S;q)=
  \sum_{i\in S}
    \IFrep{\|\mathbf z_i-\bar{\mathbf z}_i\|_2}
    \IFalloc{m_i(q)}
  -\lambda
    \IFalloc{
    \sum_{\substack{i,j\in S, i\ne j}}
    \operatorname{Sim}(\mathbf z_i,\mathbf z_j)
    },
  \qquad |S|=B.
  \label{eq:selection_principle}
\end{equation}
The first term favors question-relevant changes rather than motion alone,
while the second discounts repeated evidence. Eq.~\ref{eq:selection_principle}
is a selection principle rather than an exact optimization of
$\mathcal I(\mathcal O)$.

\textbf{\IFalloc{Allocate across frames.}}
We inherit integer frame quotas $b_t$ from a frozen allocator, with
$0\leq b_t\leq N_t$ and $\sum_t b_t=B$, requiring
$|S\cap I_t|=b_t$ for each frame index set $I_t$. Quotas determine where
across time to spend the budget; selection within each frame determines which
regions occupy slots.

\textbf{\IFalloc{Select relative to what is already retained.}}
Within each frame, a feature-norm- and question-weighted determinantal point
process (DPP) favors candidates that add new directions to the subset
\citep{kulesza2012determinantal}. Across adjacent frames, a sparse
correspondence graph uses feature similarity, spatial consistency, matching
confidence, and local state change to reward jointly retaining linked
observations. This distinction matters because visually similar endpoints may
be redundant within a frame yet complementary across time when their comparison identifies a transition.

The resulting greedy procedure considers feasible singleton and
adjacent-frame pair moves under the remaining quotas. It terminates with
exactly $B$ distinct source anchors but makes no claim of global subset
optimality. Selection changes \IFalloc{the retained mask, not the retained
embeddings}: it does not average, merge, or temporally update source features.
Appendix~\ref{app:selection} gives the DPP marginal, pair objective,
complementarity analysis, and feasibility proof.

\subsection{Zero-Expansion Temporal Refinement}
\label{sec:temporal_refinement}
\label{sec:learning_objective}

Selection determines which source evidence occupies the compact sequence;
refinement determines which additional temporal context can inform that
evidence. A retained contact patch, for example, may still require a nearby
frame to clarify whether contact has just occurred or has already been broken.
Motivated by Proposition~\ref{prop:cut}, each selected anchor $i\in S$
therefore retrieves a fixed number of feature-similar neighboring-frame
tokens, with context indices $C(i)$ not restricted to $S$.

Bottleneck attention uses relative temporal and spatial information to compute
normalized weights $\alpha_{ij}$. With value projection $W_V$ and a gated,
bounded residual branch $G_\theta$, the refined anchor is
\begin{equation}
  \widehat{\mathbf z}_i
  =
  \IFrep{\mathbf z_i}
  +
  \IFinter{
    G_\theta\!\left(
      \mathbf z_i,\,
      \sum\nolimits_{j\in C(i)}
      \alpha_{ij}W_V\mathbf z_j
    \right)
  },
  \qquad i\in S.
  \label{eq:refinement}
\end{equation}
The gate and bounded residual output limit the modification to each source
embedding. Context may come from preceding or following frames in the offline
setting, but every selected anchor still produces exactly one output
embedding. Hence
$\widehat{\mathbf Z}_S\in\mathbb R^{B\times D}$, while source positions
$p_i$ and intermediate features $d_i$ remain unchanged.

In the language of Proposition~\ref{prop:information}, context retrieval
changes \IFinter{what evidence can be read}, whereas the residual branch
changes \IFinter{how that evidence is encoded}. Only the refiner parameters
$\theta$ are learned; the image encoder, language model, and selector remain
frozen. Training combines answer supervision with KL distillation from the
full-image teacher at temperature $\tau$. The teacher is used only during
training, and zero-initialized residuals make optimization begin from the
unrefined selected-token pathway. Disabling refinement therefore recovers the
training-free selection pathway.

\emph{Zero expansion} refers specifically to the visual output interface:
the refiner maps $B$ retained anchors to $B$ refined anchors while being
allowed to read additional context. It does not remove the cost of frame
encoding, context retrieval, or refinement, and it does not by itself imply an
end-to-end speedup. Appendix~\ref{app:refinement} gives additional architectural and
learning-objective details.

\section{Experiments}
\label{sec:experiments}

We evaluate the image-first pathway on 13 video-understanding benchmarks
across three vision--language model backbones. The experiments ask two
questions: whether coordinated selection improves performance at a fixed
visual-token budget, and whether a compact image-derived pathway can retain
the aggregate performance of full-input references while reducing inference
cost.

\textbf{Experimental setup.}
\label{sec:experimental-setup}
Our evaluation spans general video understanding, temporal reasoning,
long-video understanding, intent recognition, and fine-grained perception.
Table~\ref{tab:part-i} reports six temporal, long-video, and perception
benchmarks, while Fig.~\ref{fig:part-ii} presents the remaining seven.
We follow the evaluation protocol of each benchmark and report accuracy
(\%). We additionally report unweighted benchmark-level macro-averages:
$\mathrm{Avg.}(6)$ over the six benchmarks in Table~\ref{tab:part-i} and
$\mathrm{Avg.}(13)$ over all 13 benchmarks. Both are computed directly from
the individual benchmark scores.

We evaluate Qwen2-VL-7B, Qwen3-VL-8B, and Qwen3-VL-32B
\citep{wang2024qwen2vl,bai2025qwen3vl}. For each backbone,
\textit{Video/Conv3D} uses the native video-encoding pathway, whereas
\textit{Image} supplies the sampled frames through the image pathway.
Both uniformly sample 32 frames without an additional visual-token cap and
therefore serve as full-input references rather than budget-matched
compression baselines. We compare against Uniform, FlashVID, VidCom$^{2}$,
FastVID, and VisionZip
\citep{fan2026flashvid,liu2025vidcom2,shen2025fastvid,yang2025visionzip},
and additionally evaluate FastV, SparseVLM, and DyCoke on Qwen3-VL-8B
\citep{chen2024fastv,zhang2025sparsevlm,tao2025dycoke}.
All compression methods target approximately 1.5K visual tokens. We use
public implementations and recommended configurations, adapted to this target
budget; actual mean token counts vary slightly with frame organization,
merging, and padding and are reported in Table~\ref{tab:part-i}.

Our method uses the image-first pathway described in
Section~\ref{sec:method}: the image encoder, language model, and selector
remain frozen, while the lightweight temporal refiner is learned. Within
each backbone, all methods use the same evaluation samples, questions,
prompt templates, generation settings, and answer-parsing procedures. We
report mean accuracy over three runs with seeds 42, 3407, and 114514.
Visual-token counts denote the mean number of visual tokens supplied to the
language model, excluding text prompts and generated answers. Rankings among
compression methods use only budget-matched methods.
\begin{table}[t]
\caption{\textbf{Six benchmarks.}
$\mathrm{Avg.}(6)$ is the unweighted mean.
Bold and underlined values mark best and second-best budget-matched results;
Image and Video/Conv3D are full-input references.}
\vspace{1mm}
\label{tab:part-i}
\centering

\begingroup
\small
\setlength{\tabcolsep}{1.6pt}
\renewcommand{\arraystretch}{1.00}

\begin{tabular*}{\linewidth}
{@{\extracolsep{\fill}}lrrrrrrrr@{}}
\toprule
Method
& \shortstack{Visual\\tokens $\downarrow$}
& \shortstack{Motion\\Bench}
& \shortstack{Video-\\MME}
& \shortstack{Video-\\MME-v2}
& NExT-QA
& \shortstack{Perception\\Test}
& \shortstack{Perception\\Comp}
& \shortstack{Avg. (6)\\$\uparrow$} \\
\midrule

\multicolumn{9}{l}{\textit{Qwen2-VL-7B}} \\[-1pt]

Video/Conv3D
& 2,780
& 53.61
& 57.78
& 22.50
& 80.33
& 59.17
& 18.39
& 48.63 \\

Image
& 5,561
& 51.62
& 62.78
& 22.03
& 81.55
& 58.65
& 28.25
& 50.81 \\[1pt]

Uniform
& 1,536
& 50.11
& 51.65
& 20.35
& 76.88
& 55.42
& 17.65
& 45.34 \\

FlashVID
& 1,536
& 51.27
& 53.64
& 21.67
& 78.89
& 57.39
& 19.13
& 47.00 \\

VidCom$^{2}$
& 1,529
& 51.84
& \underline{54.78}
& 22.37
& 79.77
& \underline{58.02}
& 19.24
& 47.67 \\

FastVID
& 1,522
& 52.02
& 53.49
& \textbf{22.84}
& \underline{80.24}
& 57.79
& \underline{21.13}
& 47.92 \\

VisionZip
& 1,532
& \textbf{53.26}
& 54.03
& \underline{22.70}
& 79.53
& \textbf{58.34}
& 21.06
& \underline{48.15} \\

\textbf{Ours}
& 1,536
& \underline{53.12}
& \oursbest{63.47}
& 21.39
& \oursbest{82.13}
& 57.82
& \oursbest{28.73}
& \oursbest{51.11} \\

\midrule

\multicolumn{9}{l}{\textit{Qwen3-VL-8B}} \\[-1pt]

Video/Conv3D
& 2,212
& 57.34
& 65.00
& 27.50
& 80.91
& 68.99
& 30.49
& 55.04 \\

Image
& 4,424
& 58.08
& 67.96
& 30.78
& 81.84
& 69.28
& 34.53
& 57.08 \\[1pt]

Uniform
& 1,536
& 55.48
& 61.22
& 24.35
& 77.12
& 66.85
& 27.65
& 52.11 \\

FastV
& 1,536
& 56.12
& 63.45
& 25.88
& 78.45
& 67.24
& 29.15
& 53.38 \\

SparseVLM
& 1,536
& 56.45
& 63.85
& 26.15
& 78.95
& 67.65
& 29.55
& 53.77 \\

DyCoke
& 1,536
& 56.85
& 64.25
& 26.65
& 79.35
& 68.12
& 29.85
& 54.18 \\

FlashVID
& 1,527
& 57.12
& 64.40
& \underline{27.69}
& 79.75
& 68.85
& 29.58
& 54.57 \\

VidCom$^{2}$
& 1,506
& 57.31
& \underline{66.88}
& 27.22
& 80.51
& 68.94
& \underline{30.98}
& \underline{55.31} \\

FastVID
& 1,493
& 56.63
& 65.69
& 27.23
& 81.08
& 68.72
& 30.07
& 54.90 \\

VisionZip
& 1,521
& \underline{57.43}
& 64.12
& 26.05
& \underline{81.18}
& \underline{69.11}
& 30.08
& 54.66 \\

\textbf{Ours}
& 1,535
& \oursbest{58.28}
& \oursbest{68.08}
& \oursbest{30.95}
& \oursbest{81.93}
& \oursbest{69.24}
& \oursbest{34.92}
& \oursbest{57.23} \\

\midrule

\multicolumn{9}{l}{\textit{Qwen3-VL-32B}} \\[-1pt]

Video/Conv3D
& 2,212
& 62.19
& 67.96
& 28.44
& 84.53
& 77.53
& 33.63
& 59.05 \\

Image
& 4,424
& 63.93
& 71.85
& 31.72
& 84.76
& 78.00
& 31.39
& 60.28 \\[1pt]

Uniform
& 1,536
& 58.25
& 62.15
& 24.85
& 80.12
& 72.55
& 30.15
& 54.68 \\

FlashVID
& 1,527
& 56.14
& 58.29
& 21.46
& 76.89
& 60.42
& 32.71
& 50.99 \\

VidCom$^{2}$
& 1,504
& 60.52
& \underline{68.38}
& \underline{28.23}
& 84.01
& \underline{76.92}
& \underline{33.68}
& \underline{58.62} \\

FastVID
& 1,485
& 60.12
& 66.42
& 27.55
& 83.59
& 76.12
& \textbf{34.48}
& 58.05 \\

VisionZip
& 1,521
& \underline{61.64}
& 66.54
& 27.61
& \underline{84.12}
& 75.89
& 33.22
& 58.17 \\

\textbf{Ours}
& 1,535
& \oursbest{64.08}
& \oursbest{71.92}
& \oursbest{31.87}
& \oursbest{84.85}
& \oursbest{78.16}
& 31.75
& \oursbest{60.44} \\

\bottomrule
\end{tabular*}
\vspace{-4mm}
\endgroup
\end{table}

\subsection{Main Results}
\label{sec:main-results}

Table~\ref{tab:part-i} reports MotionBench, Video-MME, Video-MME-v2,
NExT-QA, Perception Test, and PerceptionComp
\citep{hong2025motionbench,fu2025videomme,fu2026videommev2,
xiao2021nextqa,patraucean2023perception,li2026perceptioncomp}. Unless stated otherwise,
result triplets follow Qwen2-VL-7B, Qwen3-VL-8B, and Qwen3-VL-32B.

\begin{figure}[t]
  \centering
  \includegraphics[width=\linewidth]{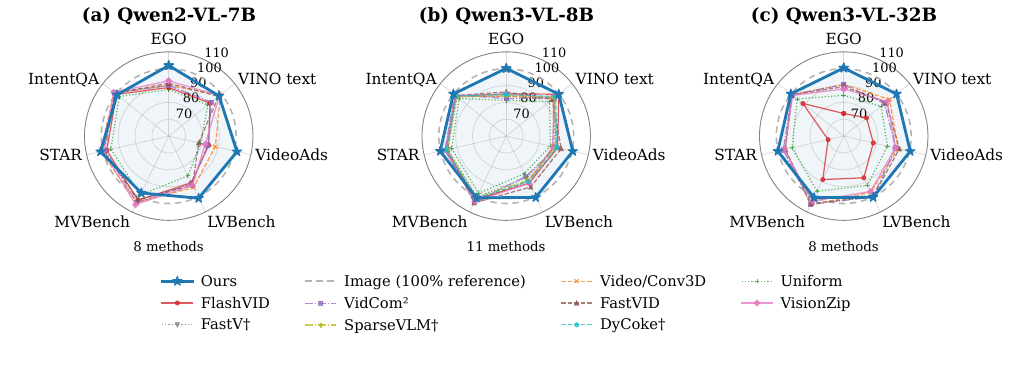}
  \vspace{-10mm}\caption{\textbf{General video understanding across three backbones.}
  Scores are normalized to the corresponding Image pathway (100\%).
  The shared radial range is 60\%--110\% and does not start at zero.
  FastV, SparseVLM, and DyCoke are evaluated only on Qwen3-VL-8B.
  Exact scores are reported in Appendix~\ref{app:additional-results}.}
  \label{fig:part-ii}
  \vspace{-4mm}
\end{figure}

\textbf{Compact image representations outperform budget-matched compression.}
At comparable visual-token budgets, our method achieves
$\mathrm{Avg.}(6)$ scores of \textbf{51.11}, \textbf{57.23}, and
\textbf{60.44}, exceeding the strongest compression baseline by
\textbf{2.96}, \textbf{1.92}, and \textbf{1.82} percentage points,
respectively. The corresponding strongest baselines by six-benchmark
macro-average are VisionZip, VidCom$^{2}$, and VidCom$^{2}$. Our method
obtains the highest score among compression methods on 3/6, 6/6, and 5/6
benchmarks, respectively. The gain is not uniform: on Qwen3-VL-32B,
PerceptionComp remains below FastVID (31.75 vs.\ 34.48).
\textbf{Most of the full-image token sequence is unnecessary for aggregate
performance.}
Our method uses 1,536, 1,535, and 1,535 visual tokens, compared with
5,561, 4,424, and 4,424 for Image, corresponding to reductions of
\textbf{72.4\%}, \textbf{65.3\%}, and \textbf{65.3\%}. Despite these
reductions, its six-benchmark macro-averages are comparable to, and slightly
above, Image's 50.81, 57.08, and 60.28. The main result is therefore the
retention of aggregate full-image performance under a substantially smaller
visual-token budget, rather than the small numerical gains over the
full-input reference.

\subsection{Cross-Benchmark Generalization}
\label{sec:cross-benchmark-generalization}

We further evaluate EgoSchema (EGO), VINO text, VideoAds, LVBench, MVBench,
STAR, and IntentQA
\citep{mangalam2023egoschema,zhang2024vinoground,zhang2025videoads,wang2025lvbench,
li2024mvbench,wu2021star,li2023intentqa}. Fig.~\ref{fig:part-ii} normalizes each score to the Image reference
for the same backbone and benchmark. Exact scores and 13-benchmark
macro-averages are reported in Appendix~\ref{app:additional-results}.

\textbf{The trade-off persists across task types and model scales.}
Relative to the strongest compression baseline on each benchmark, our method
achieves higher scores on \textbf{4/7}, \textbf{5/7}, and \textbf{6/7}
benchmarks, with an additional tie on VINO text for Qwen3-VL-8B. Improvements
on EGO, VideoAds, LVBench, and STAR hold across all three backbones. MVBench
is a consistent exception, where our scores trail the strongest compression
baseline by 4.69, 2.13, and 3.01 percentage points. Thus the advantage extends
across several task families without implying uniform improvement on every
benchmark. Across all 13 benchmarks, our method reaches $\mathrm{Avg.}(13)$ scores of
\textbf{59.43}, \textbf{62.75}, and \textbf{66.28}. Compared with the strongest compression baseline by macro-average
(VisionZip, FastVID, and VidCom$^{2}$, respectively), our method improves
by \textbf{2.55}, \textbf{2.89}, and \textbf{2.42} percentage points. They are also close to and slightly above
the corresponding Image results of 59.26, 62.58, and 66.09. Overall, the
image-first pathway retains full-input aggregate performance while using only
approximately \textbf{28\%--35\%} of Image's visual tokens.

\subsection{Efficiency Analysis}
\label{sec:efficiency}

We profile inference on Qwen3-VL-32B-Instruct using a single GPU with batch
size 1. Time to first token (TTFT) includes visual encoding, token selection, temporal refinement, and
language-model prefill; end-to-end latency additionally includes autoregressive decoding.
The reported $\mathrm{Avg.}(13)$ is the three-seed macro-average from the
main evaluation. As Table \ref{tab:eff} shows:

\begin{table}[t]
\caption{\textbf{Inference efficiency on Qwen3-VL-32B-Instruct.}
Speedup is relative to Image using end-to-end latency.}
\vspace{1mm}
\label{tab:efficiency}
\centering
\small
\setlength{\tabcolsep}{1.8pt}
\renewcommand{\arraystretch}{1.05}
\begin{tabular*}{\linewidth}{@{\extracolsep{\fill}}lrrrrrrrrr@{}}
\toprule
Method &
\shortstack{Visual\\tokens $\downarrow$} &
\shortstack{Selection\\(ms) $\downarrow$} &
\shortstack{TTFT\\(ms) $\downarrow$} &
\shortstack{E2E\\(ms) $\downarrow$} &
\shortstack{Throughput\\(q/s) $\uparrow$} &
\shortstack{Memory\\(GB) $\downarrow$} &
\shortstack{Avg.(13)\\$\uparrow$} &
\shortstack{$\Delta$ Image\\(pp) $\uparrow$} &
\shortstack{Speedup\\$\uparrow$} \\
\midrule
Video/Conv3D & 2212.0 & --    & 454.0 & 508.5 & 1.97 & 67.97 & 64.20 & -1.89  & 1.79$\times$ \\
Image        & 4424.0 & --    & 849.1 & 911.2 & 1.10 & 69.04 & 66.09 &  0.00  & 1.00$\times$ \\
\midrule
FlashVID     & \textbf{1507.4} & 43.59 & \textbf{297.7} & \textbf{330.5} & \textbf{3.03} & \textbf{67.52} & 54.32 & -11.77 & \textbf{2.76$\times$} \\
VidCom$^2$   & 1515.3 & \textbf{3.28} & 348.4 & 403.3 & 2.48 & 67.88 & 63.86 & -2.23 & 2.26$\times$ \\
FastVID      & 1515.0 & 15.38 & 356.7 & 413.8 & 2.42 & 67.81 & 63.58 & -2.51 & 2.20$\times$ \\
VisionZip    & 1532.0 & 57.16 & 409.2 & 459.4 & 2.18 & 67.82 & 63.29 & -2.80 & 1.98$\times$ \\
\textbf{Ours}& 1535.0 & 14.52 & 365.8 & 421.8 & 2.37 & 67.65 & \textbf{66.28} & \textbf{+0.19} & 2.16$\times$ \\
\bottomrule
\end{tabular*}
\label{tab:eff}
\vspace{-2mm}
\end{table}

\textbf{Compression reduces the pre-generation cost.}
Relative to Image, Ours reduces visual tokens from 4,424 to 1,535
(\textbf{65.3\%}), TTFT from 849.1\,ms to 365.8\,ms
(\textbf{56.9\%}), and end-to-end latency from 911.2\,ms to 421.8\,ms
(\textbf{53.7\%}). Throughput increases from 1.10 to 2.37 queries/s,
corresponding to a \textbf{$2.16\times$} speedup, while peak GPU memory
decreases by 1.39\,GB. These savings retain aggregate performance:
66.28 for Ours versus 66.09 for Image.

\textbf{The main savings occur before autoregressive decoding.}
Fig.~\ref{fig:latency-breakdown} in the Appendix shows that TTFT accounts for most of the
reduction, falling from 849.1\,ms for Image to 365.8\,ms for Ours, whereas
post-first-token generation changes only from 62.1\,ms to 56.0\,ms.
The 14.52\,ms selection cost is already included in TTFT and is offset by the
shorter visual prefill. At the same $\sim$1.5K-token budget, Ours adds only
8.0\,ms over FastVID for a \textbf{2.70}-point macro-average gain and
18.5\,ms over VidCom$^2$ for a \textbf{2.42}-point gain. It also exceeds
VisionZip by \textbf{2.99} points while being 37.6\,ms faster. FlashVID
achieves the lowest latency, but with an 11.77-point reduction relative to
Image. The resulting trade-off is therefore not minimum latency alone, but
substantial inference savings while retaining full-input aggregate
performance.

\subsection{Ablation Study}
\label{sec:ablation}

We isolate dynamic token allocation, cross-frame coordination, and temporal
refinement, with component and system-level ablations reported in
Appendix~\ref{app:ablations}. On Qwen3-VL-8B, dynamic allocation raises the
13-benchmark macro-average from 60.12 to 61.74, cross-frame coordination
further increases it to 62.64, and temporal refinement reaches 62.75.
Additional Qwen2-VL-7B ablations show sensitivity to correspondence
quality, candidate-frame coverage, and visual-token budget.
\section{Related Work}
\label{sec:related_work}

Video encoders differ in when temporal information is introduced.
Frame-based methods aggregate independently encoded images over time, while
C3D, I3D, and video Transformers introduce spatiotemporal interaction within
the encoder
\citep{karpathy2014large,simonyan2014two,donahue2015long,
tran2015learning,carreira2017quo,bertasius2021space,arnab2021vivit}.
Video large language models inherit this choice under a sequence-length
constraint: some aggregate image-derived features before language modeling,
while unified models such as Qwen2-VL and Qwen3-VL share much of the
image--video stack but differ in how temporal content enters the visual
representation
\citep{maaz2024videochatgpt,zhang2023videollama,weng2024longvlm,
wang2024qwen2vl,bai2025qwen3vl}.
Recent methods also reduce visual sequences through token selection, pruning,
or merging
\citep{yang2025visionzip,shen2025fastvid,liu2025vidcom2,fan2026flashvid}.
Our study instead separates representation source, token allocation, and
temporal interaction. Related distinctions among temporal-fusion stages have
been studied in MotionBench \citep{hong2025motionbench}; here we hold image
representations and output budgets fixed while varying coordinated selection
and post-selection refinement.
We provide a more complete discussion of related work in
Appendix~\ref{app:related_work}.
\section{Conclusion}
\label{sec:conclusion}

We study whether compact video encoding can be built on image representations
without either mixing frames before selection or forwarding the full dense
image sequence. Our image-first design separates \emph{representation},
\emph{allocation}, and \emph{interaction}: frame-specific candidates are
preserved first, a fixed output budget is allocated jointly across the video,
and temporal context is introduced afterward through lightweight refinement.
The accompanying analysis distinguishes evidence lost at selection from error
in using retained evidence, and characterizes the interaction omitted across
the selection boundary. Across 13 benchmarks and three vision--language
backbones, this design retains full-image aggregate performance while using
only about $28\%$--$35\%$ of the visual tokens, with substantial inference
savings. These results suggest a simple design principle for video
representation: preserve source evidence before compression, allocate the
limited budget to complementary observations, and add temporal context only
where it is needed.

\bibliography{references}
\bibliographystyle{abbrvnat}

\FloatBarrier
\appendix
\widowpenalty=500
\clubpenalty=500
\section{Extended Related Work}
\label{app:related_work}

\textbf{Where temporal interaction enters.}
A recurring design choice in video representation is when information is
exchanged across frames. Early frame-based systems process images separately
and aggregate them afterward, whereas C3D and I3D introduce temporal coupling
inside the visual encoder through three-dimensional convolutions
\citep{karpathy2014large,simonyan2014two,donahue2015long,
tran2015learning,carreira2017quo}. Video Transformers express related choices
through spatial--temporal attention factorizations
\citep{bertasius2021space,arnab2021vivit}. MotionBench explicitly studies
pre-, post-, and through-encoder temporal fusion
\citep{hong2025motionbench}, while Efficient Motion-Aware Video MLLM uses
compressed-video structure to combine RGB appearance and motion information
within a compact video representation \citep{zhao2025ema}. Our focus is not
to argue for one universal fusion stage, but to ask what remains selectable
when temporal interaction is introduced before or after a fixed-budget
selection step.

\textbf{Image-derived representations for long video understanding.}
Many VideoLLMs build video representations from image-derived features while
differing in how temporal information is aggregated. Video-LLaMA uses a
Video Q-Former over visual features, and LongVLM introduces hierarchical
local--global aggregation for long videos
\citep{zhang2023videollama,weng2024longvlm}. Koala conditions learnable
spatiotemporal queries on sparse key-frame tokens to adapt pretrained VideoLLMs
to longer temporal horizons \citep{tan2024koala}, while MA-LMM processes
frames online and maintains compressed visual memory for long-term reference
\citep{he2024malmm}. Unified multimodal models such as Qwen2-VL and Qwen3-VL
further share substantial image--video infrastructure while differing in how
temporal content enters visual tokens
\citep{wang2024qwen2vl,bai2025qwen3vl}. These works establish that video
understanding can be built on image-derived representations; our setting
instead asks how to preserve independently addressable image candidates under
a fixed output-token budget before adding temporal context.

\textbf{Visual-token pruning, merging, and compression.}
A complementary line of work reduces the visual sequence supplied to or
processed inside the language model. FastV uses early-layer attention to prune
visual tokens in later language-model layers \citep{chen2024fastv}, while
SparseVLM performs text-guided, training-free visual-token sparsification with
adaptive pruning and token recycling \citep{zhang2025sparsevlm}. For video,
VisionZip, FastVID, VidCom$^2$, and FlashVID exploit token importance, temporal
density, frame uniqueness, or spatiotemporal redundancy
\citep{yang2025visionzip,shen2025fastvid,liu2025vidcom2,fan2026flashvid}.
DyCoke combines temporal token compression with dynamic KV-cache reduction
during decoding \citep{tao2025dycoke}, and HoliTom combines outer-LLM
spatiotemporal merging with inner-LLM token merging
\citep{shao2025holitom}. More recently, unified spatiotemporal token
compression has treated frame- and token-level reduction jointly under very
low retention ratios \citep{du2026unified}. These methods primarily optimize
how an existing visual stream is pruned, merged, or compressed, whereas our
controlled comparisons additionally vary the representation available
\emph{before} selection.

\textbf{Relation to our formulation.}
Our study therefore separates three factors that are often entangled in prior
systems: \emph{representation source}, \emph{cross-frame token allocation},
and \emph{temporal interaction}. We first preserve frame-specific image
candidates, then coordinate a fixed output budget across the full video, and
only afterward allow retained anchors to read temporal context. This
decomposition is complementary to prior work on temporal fusion and token
compression: rather than treating compression as a single post-encoding
operation, it distinguishes which evidence remains individually selectable,
which observations receive output slots, and which additional context can
inform those selected anchors.

\section{Additional Experimental Results}
\label{app:additional-results}

Table~\ref{tab:general-exact} reports the exact scores for the seven
benchmarks summarized in Fig.~\ref{fig:part-ii}, together with the
macro-average over all 13 benchmarks. Each benchmark receives equal weight:
$\mathrm{Avg.}(13)$ is computed directly from the 13 individual benchmark
scores rather than by averaging the six- and seven-benchmark group means.

\begin{table}[t]
\caption{\textbf{Additional seven-benchmark results.}
$\mathrm{Avg.}(13)$ is the unweighted mean over all 13 benchmarks.
Bold and underlined values denote the best and second-best results within
each backbone.}
\vspace{1mm}
\label{tab:general-exact}
\centering

\begingroup
\small
\setlength{\tabcolsep}{1.4pt}
\renewcommand{\arraystretch}{1.00}

\begin{tabular*}{\linewidth}
{@{\extracolsep{\fill}}lrrrrrrrr@{}}
\toprule
Method
& EGO
& \shortstack{VINO\\text}
& VideoAds
& LVBench
& MVBench
& STAR
& IntentQA
& \shortstack{Avg.(13)\\$\uparrow$} \\
\midrule

\multicolumn{9}{l}{\textit{Qwen2-VL-7B}} \\[-1pt]

Video/Conv3D
& 63.00
& \underline{65.00}
& 51.82
& 39.35
& 63.19
& \underline{76.06}
& \underline{94.85}
& 57.31 \\

Image
& \underline{70.00}
& \textbf{65.50}
& \underline{58.64}
& \underline{41.94}
& 60.97
& 75.07
& 93.44
& \underline{59.26} \\[1pt]

Uniform
& 61.35
& 58.74
& 46.52
& 36.15
& 60.18
& 71.44
& 91.22
& 53.67 \\

FlashVID
& 62.18
& 59.83
& 49.46
& 38.08
& 62.33
& 73.37
& 93.14
& 55.41 \\

VidCom$^{2}$
& 63.05
& 60.48
& 49.12
& 38.36
& \underline{63.22}
& 74.11
& 94.42
& 56.06 \\

FastVID
& 63.96
& 64.05
& 45.88
& 38.74
& 62.05
& 74.82
& 94.11
& 56.24 \\

VisionZip
& 64.95
& 64.54
& 48.15
& 38.75
& \textbf{64.12}
& 74.96
& \textbf{95.05}
& 56.88 \\

\rowcolor{black!5}
\textbf{Ours}
& \textbf{71.33}
& 64.33
& \textbf{59.55}
& \textbf{42.33}
& 59.43
& \textbf{76.18}
& 92.77
& \textbf{59.43} \\

\midrule

\multicolumn{9}{l}{\textit{Qwen3-VL-8B}} \\[-1pt]

Video/Conv3D
& 61.00
& 65.00
& 55.00
& 39.35
& 68.19
& 73.38
& 81.26
& 59.49 \\

Image
& \underline{73.00}
& \textbf{68.00}
& \underline{61.82}
& \underline{43.87}
& 66.25
& \underline{75.49}
& \underline{82.67}
& \underline{62.58} \\[1pt]

Uniform
& 59.25
& 63.14
& 53.68
& 37.15
& 64.72
& 70.36
& 79.15
& 56.93 \\

FastV
& 61.15
& 65.25
& 55.34
& 38.45
& 66.18
& 72.15
& 80.35
& 58.40 \\

SparseVLM
& 61.45
& 65.88
& 55.72
& 39.12
& 66.55
& 72.48
& 80.75
& 58.81 \\

DyCoke
& 61.85
& 66.35
& 56.15
& 39.45
& 66.92
& 72.85
& 81.12
& 59.21 \\

FlashVID
& 62.05
& \underline{67.92}
& 56.40
& 39.95
& 67.42
& 72.84
& 81.08
& 59.62 \\

VidCom$^{2}$
& 60.08
& 65.46
& 55.95
& 38.02
& \underline{68.38}
& 73.47
& 82.02
& 59.63 \\

FastVID
& 62.94
& 64.55
& 57.78
& 41.02
& \textbf{68.84}
& 72.21
& 81.46
& 59.86 \\

VisionZip
& 61.95
& 65.56
& 54.51
& 39.96
& 68.25
& 73.04
& 81.56
& 59.45 \\

\rowcolor{black!5}
\textbf{Ours}
& \textbf{73.13}
& \underline{67.92}
& \textbf{62.15}
& \textbf{44.03}
& 66.71
& \textbf{75.64}
& \textbf{82.77}
& \textbf{62.75} \\

\midrule

\multicolumn{9}{l}{\textit{Qwen3-VL-32B}} \\[-1pt]

Video/Conv3D
& 68.00
& 73.50
& 61.36
& 42.90
& \textbf{71.39}
& 75.35
& 87.82
& 64.20 \\

Image
& \underline{75.00}
& \underline{77.50}
& \underline{65.91}
& 44.19
& 67.78
& \underline{78.87}
& \underline{88.29}
& \underline{66.09} \\[1pt]

Uniform
& 63.15
& 68.45
& 57.12
& 40.85
& 65.34
& 72.15
& 84.12
& 59.94 \\

FlashVID
& 55.06
& 59.92
& 51.42
& 38.65
& 60.08
& 54.81
& 80.37
& 54.32 \\

VidCom$^{2}$
& 67.92
& 70.56
& 60.41
& \underline{44.24}
& \underline{71.16}
& 76.36
& 87.77
& 63.86 \\

FastVID
& 66.95
& 71.44
& 61.88
& 44.14
& 70.75
& 75.36
& 87.64
& 63.57 \\

VisionZip
& 65.94
& 72.08
& 60.39
& 42.63
& 69.78
& 75.80
& 87.18
& 63.29 \\

\rowcolor{black!5}
\textbf{Ours}
& \textbf{75.33}
& \textbf{77.58}
& \textbf{66.24}
& \textbf{44.32}
& 68.15
& \textbf{78.96}
& \textbf{88.43}
& \textbf{66.28} \\

\bottomrule
\end{tabular*}
\vspace{-4mm}
\endgroup
\end{table}

\section{Extended Ablations and Training Details}
\label{app:ablations}
\label{sec:extended_ablations}

This section provides additional experiments that isolate the source of our
gains, document the training of the temporal refiner, and examine robustness
to correspondence structure, candidate-frame coverage, and visual-token
budget.

\subsection{Training-Fair Comparison}
\label{app:training-fair}

Most compression baselines in the main comparison are training-free, whereas
our complete pathway additionally contains a learned temporal refiner. To
isolate this factor, we disable temporal refinement while keeping the
coordinated selector and visual-token budget unchanged. The resulting
\emph{selection-only} variant is fully training-free. Table~\ref{tab:training-fair} reports the matched comparison.

\begin{table}[t]
\caption{\textbf{Training-fair comparison.}
TF denotes training-free compression; \textit{Selection} disables the learned
refiner. $\Delta$ is measured against the strongest prior TF baseline.}
\label{tab:training-fair}
\centering
\vspace{1mm}

\begingroup
\small
\setlength{\tabcolsep}{2.2pt}
\renewcommand{\arraystretch}{1.02}

\begin{tabular*}{\linewidth}
{@{\extracolsep{\fill}}lcrrrr@{}}
\toprule
Method
& Train
& \shortstack{Visual tokens $\downarrow$}
& \shortstack{Avg.(6) $\uparrow$}
& \shortstack{Avg.(13) $\uparrow$}
& \shortstack{$\Delta$ prior TF $\uparrow$} \\
\midrule

\multicolumn{6}{l}{\textit{Qwen2-VL-7B}} \\[-1pt]

Full Image
& --
& 5,561
& 50.81
& 59.26
& -- \\

VisionZip
& TF
& 1,532
& 48.15
& 56.88
& 0.00 \\

\textbf{Ours: selection}
& \textbf{TF}
& 1,536
& 50.97
& 59.37
& \textbf{+2.49} \\

\textbf{Ours: + refiner}
& Learned
& 1,536
& \textbf{51.11}
& \textbf{59.43}
& \textbf{+2.55} \\

\midrule

\multicolumn{6}{l}{\textit{Qwen3-VL-8B}} \\[-1pt]

Full Image
& --
& 4,424
& 57.08
& 62.58
& -- \\

FastVID
& TF
& 1,493
& 54.90
& 59.86
& 0.00 \\

\textbf{Ours: selection}
& \textbf{TF}
& 1,535
& 57.01
& 62.64
& \textbf{+2.78} \\

\textbf{Ours: + refiner}
& Learned
& 1,535
& \textbf{57.23}
& \textbf{62.75}
& \textbf{+2.89} \\

\midrule

\multicolumn{6}{l}{\textit{Qwen3-VL-32B}} \\[-1pt]

Full Image
& --
& 4,424
& 60.28
& 66.09
& -- \\

VidCom$^{2}$
& TF
& 1,504
& 58.62
& 63.86
& 0.00 \\

\textbf{Ours: selection}
& \textbf{TF}
& 1,535
& 60.33
& 66.22
& \textbf{+2.36} \\

\textbf{Ours: + refiner}
& Learned
& 1,535
& \textbf{60.44}
& \textbf{66.28}
& \textbf{+2.42} \\

\bottomrule
\end{tabular*}
\vspace{-1.5mm}
\endgroup
\end{table}

\textbf{Training-free selection accounts for the primary gain.}
The selection-only variant reaches $\mathrm{Avg.}(13)$ scores of 59.37,
62.64, and 66.22 on Qwen2-VL-7B, Qwen3-VL-8B, and Qwen3-VL-32B,
respectively. These exceed the strongest prior training-free compression
baselines by \textbf{2.49}, \textbf{2.78}, and \textbf{2.36} percentage
points. They also match or slightly exceed the corresponding Full Image
references while using only approximately 1.5K visual tokens. Thus the main
improvement does not depend on additional training.

\textbf{Temporal refinement provides a targeted complement.}
Enabling the learned refiner increases $\mathrm{Avg.}(13)$ by
\textbf{+0.06}, \textbf{+0.11}, and \textbf{+0.06} points across the three
backbones. Larger gains appear on temporally demanding benchmarks, while
several other benchmarks change only slightly or decrease. This pattern is
consistent with the intended division of labor: coordinated selection
determines which evidence occupies the compact sequence, while refinement
adds temporal context after the output anchors are fixed.

\subsection{Component and Refiner-Objective Ablations}
\label{app:component_objective_ablation}

We jointly examine dynamic allocation, cross-frame coordination, and the two
objectives used to train the temporal refiner. All variants use Qwen3-VL-8B
with the same 1,535-token budget. Table~\ref{tab:component_objective_ablation} summarizes these comparisons.

\begin{table}[t]
\caption{\textbf{Component and refiner-objective ablations on Qwen3-VL-8B.}
All variants use 1,535 visual tokens; dashes indicate that no learned refiner
is used.}
\label{tab:component_objective_ablation}
\centering
\vspace{1mm}

\begingroup
\small
\setlength{\tabcolsep}{1.6pt}
\renewcommand{\arraystretch}{1.02}

\begin{tabular*}{\linewidth}
{@{\extracolsep{\fill}}ccccrrrrr@{}}
\toprule
\makecell{\textbf{Dynamic}\\\textbf{Allocation}} &
\makecell{\textbf{Cross-Frame}\\\textbf{Coordination}} &
\makecell{\textbf{Answer}\\\textbf{Loss}} &
\makecell{\textbf{KD}\\\textbf{Loss}} &
\textbf{VideoAds} &
\textbf{Video-MME} &
\textbf{LVBench} &
\textbf{P.Test} &
\textbf{Avg.(13)} \\
\midrule

\xmark & \xmark & -- & --
& 55.45 & 63.85 & 39.80 & 67.85 & 60.12 \\

\cmark & \xmark & -- & --
& 60.30 & 65.40 & 42.15 & 68.30 & 61.74 \\

\cmark & \cmark & -- & --
& \textbf{62.55} & 66.25 & 42.26 & 68.65 & 62.64 \\

\cmark & \cmark & \cmark & \xmark
& 61.85 & 66.80 & 42.85 & 68.80 & 62.67 \\

\cmark & \cmark & \xmark & \cmark
& 61.95 & 67.55 & 43.60 & 69.10 & 62.71 \\

\rowcolor{gray!10}
\cmark & \cmark & \cmark & \cmark
& 62.10 & \textbf{68.05} & \textbf{44.05}
& \textbf{69.25} & \textbf{62.75} \\

\bottomrule
\end{tabular*}
\vspace{-4mm}
\endgroup
\end{table}

\textbf{Allocation and coordination provide the primary gain.}
Dynamic allocation raises the 13-benchmark macro-average from 60.12 to 61.74,
and cross-frame coordination further increases it to 62.64. The latter is
already the fully training-free selection-only pathway.

\textbf{Refiner training provides a smaller complement.}
Starting from 62.64 without a learned refiner, answer supervision alone
reaches 62.67 and KD alone reaches 62.71. Combining both objectives gives
62.75. The aggregate improvement is modest, while Video-MME, LVBench, and
Perception Test benefit more strongly; VideoAds decreases slightly.

\subsection{Refiner Training Details}
\label{app:refiner_training}

The temporal refiner is intentionally lightweight. It consists of a bottleneck
cross-frame attention block and a gated feed-forward network, introducing
approximately 18M trainable parameters. The vision encoder, language model,
and coordinated selector remain frozen throughout training. We train only the
refiner for two epochs on 8 A100 GPUs for 12 hours (approximately 96 GPU-hours)
using AdamW, a learning rate of $2\times10^{-5}$, and a batch size of 64.
For knowledge distillation, we use temperature $\tau=2.0$ and weight
$\beta=1.0$.

\noindent\textbf{Training data and decontamination.} We construct a 65K-example training subset from the Video-ChatGPT training
split. To reduce train--test overlap, we filter this subset against the
evaluation prompts and available metadata of all 13 downstream benchmarks
using both exact n-gram matching and a semantic-similarity filter, removing
every sample flagged by either criterion.

\subsection{System-Level Ablations}
\label{app:system_ablation}

We next examine three system-level choices on Qwen2-VL-7B:
cross-frame correspondence, candidate-frame coverage, and the visual-token
budget. All comparisons use the same 13-benchmark macro-average. Table~\ref{tab:hyper_ablation} summarizes these system-level ablations.

\begin{table}[t]
\caption{\textbf{System-level ablations on Qwen2-VL-7B.}
We vary the selection strategy, number of candidate frames, and visual-token
budget. $\Delta$ is measured relative to the default configuration.}
\label{tab:hyper_ablation}
\centering
\vspace{1mm}

\begingroup
\small
\setlength{\tabcolsep}{2.0pt}
\renewcommand{\arraystretch}{1.02}

\begin{tabular*}{\linewidth}
{@{\extracolsep{\fill}}llrrrrr@{}}
\toprule
\textbf{Dimension} &
\textbf{Setting} &
\shortstack{\textbf{Candidate}\\\textbf{frames}} &
\shortstack{\textbf{Token}\\\textbf{budget}} &
\shortstack{\textbf{Actual}\\\textbf{tokens} $\downarrow$} &
\shortstack{\textbf{Avg.(13)}\\$\uparrow$} &
\textbf{$\Delta$} \\
\midrule

\multirow{3}{*}{Selection}
& Independent
& 32 & 1,536 & 1,534.9 & 58.50 & $-0.93$ \\

& Shuffled correspondence
& 32 & 1,536 & 1,534.9 & 57.53 & $-1.90$ \\

\rowcolor{gray!10}
& \textbf{Coupled (Ours)}
& \textbf{32} & \textbf{1,536}
& \textbf{1,534.9} & \textbf{59.43} & \textbf{0.00} \\

\midrule

\multirow{3}{*}{Candidate frames}
& 16 frames
& 16 & 1,536 & 1,517.7 & 57.97 & $-1.46$ \\

& 24 frames
& 24 & 1,536 & 1,530.1 & 58.85 & $-0.58$ \\

\rowcolor{gray!10}
& \textbf{32 frames (Ours)}
& \textbf{32} & \textbf{1,536}
& \textbf{1,534.9} & \textbf{59.43} & \textbf{0.00} \\

\midrule

\multirow{3}{*}{Token budget}
& 768 tokens
& 32 & 768 & 767.8 & 56.63 & $-2.80$ \\

\rowcolor{gray!10}
& \textbf{1,536 tokens (Ours)}
& \textbf{32} & \textbf{1,536}
& \textbf{1,534.9} & \textbf{59.43} & \textbf{0.00} \\

& 3,072 tokens
& 32 & 3,072 & 3,035.3 & 61.11 & $+1.68$ \\

\bottomrule
\end{tabular*}

\endgroup
\vspace{-4mm}
\end{table}

\textbf{Correct cross-frame correspondence matters.}
Ignoring cross-frame correspondence reduces $\mathrm{Avg.}(13)$ from 59.43
to 58.50. Randomly shuffling the correspondences decreases it further to
57.53, suggesting that incorrect pair structure can be actively harmful
rather than merely uninformative.

\textbf{Broader candidate coverage improves performance.}
Increasing the candidate pool from 16 to 24 frames raises
$\mathrm{Avg.}(13)$ from 57.97 to 58.85, and increasing it from 24 to 32
frames gives a further 0.58-point improvement. The monotonic increase over
the evaluated range is consistent with broader temporal coverage.

\textbf{The output budget controls the accuracy--compression trade-off.}
Reducing the budget from 1,536 to 768 tokens lowers $\mathrm{Avg.}(13)$ by
2.80 points, whereas increasing it to 3,072 tokens raises the score by
1.68 points. We use approximately 1.5K tokens in the main experiments because
it matches the operating regime of prior compression baselines while retaining
a substantial reduction relative to the Full Image pathway.

\subsection{Additional Efficiency Breakdown}
\label{app:efficiency_breakdown}

\begin{figure}[t!]
  \centering
  \includegraphics[width=\linewidth]{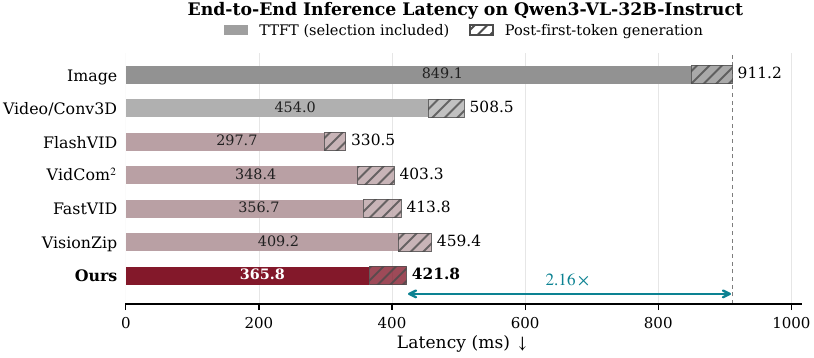}
  \vspace{-6mm}
  \caption{\textbf{Inference latency on Qwen3-VL-32B-Instruct.}
  Selection is included in TTFT. Ours reduces end-to-end latency from
  911.2\,ms to 421.8\,ms relative to Image.}
  \label{fig:latency-breakdown}
  \vspace{-2mm}
\end{figure}

\begin{figure}[t!]
  \centering
  \includegraphics[width=\linewidth]{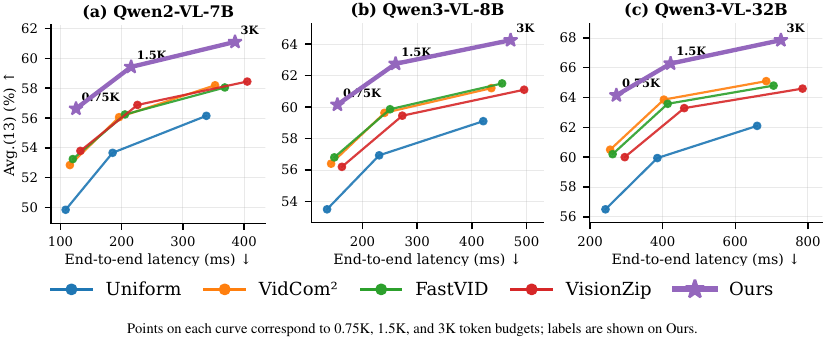}
  \vspace{-6mm}
  \caption{\textbf{Multi-budget accuracy--latency trade-off.}
  We sweep 0.75K, 1.5K, and 3K visual-token budgets on three backbones.
  Each curve connects the three operating points of one method.}
  \label{fig:multibudget-pareto}
  \vspace{-4mm}
\end{figure}

The latency reduction primarily occurs before autoregressive decoding: TTFT
falls from 849.1\,ms for Image to 365.8\,ms for Ours, while post-first-token
generation changes from 62.1\,ms to 56.0\,ms. The selection overhead is
included in TTFT and is offset by the shorter visual prefill.

\subsection{Multi-Budget Accuracy--Latency Trade-off}
\label{app:multibudget-pareto}

Fig.~\ref{fig:multibudget-pareto} summarizes the accuracy--latency trade-off
across the three evaluated budgets and backbones. The main experiments use an approximately 1.5K visual-token budget to match
prior compression methods. We further sweep the output budget over
$\{768,1536,3072\}$ tokens on Qwen2-VL-7B, Qwen3-VL-8B, and
Qwen3-VL-32B. For each budget, we measure the actual visual-token count,
end-to-end inference latency, and the macro-average over all 13 benchmarks.

\begin{figure}[t!]
  \centering
  \includegraphics[width=\linewidth]{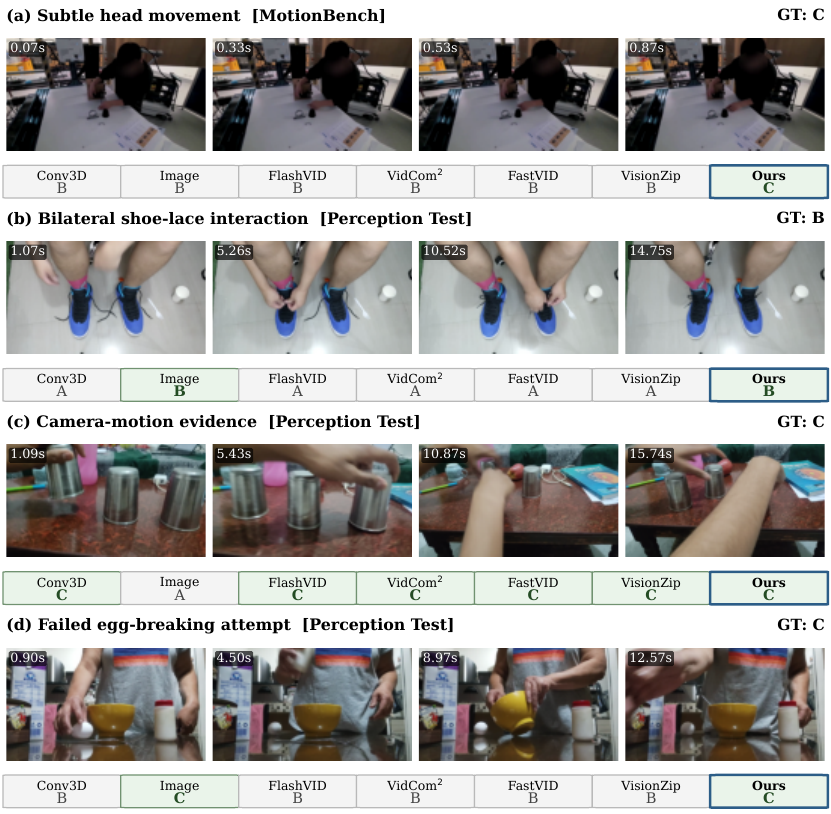}
  \vspace{-6mm}
  \caption{\textbf{Representative qualitative successes (I).}
  Four examples illustrate different forms of evidence preservation.
  In (a), the answer depends on a subtle downward head movement.
  In (b) and (d), the decision requires preserving interactions among
  multiple objects across time. In (c), camera motion is visible from the
  relative displacement of the cup group.}
  \label{fig:qualitative-success-1}
  \vspace{-4mm}
\end{figure}

\begin{figure}[t!]
  \centering
  \includegraphics[width=\linewidth]{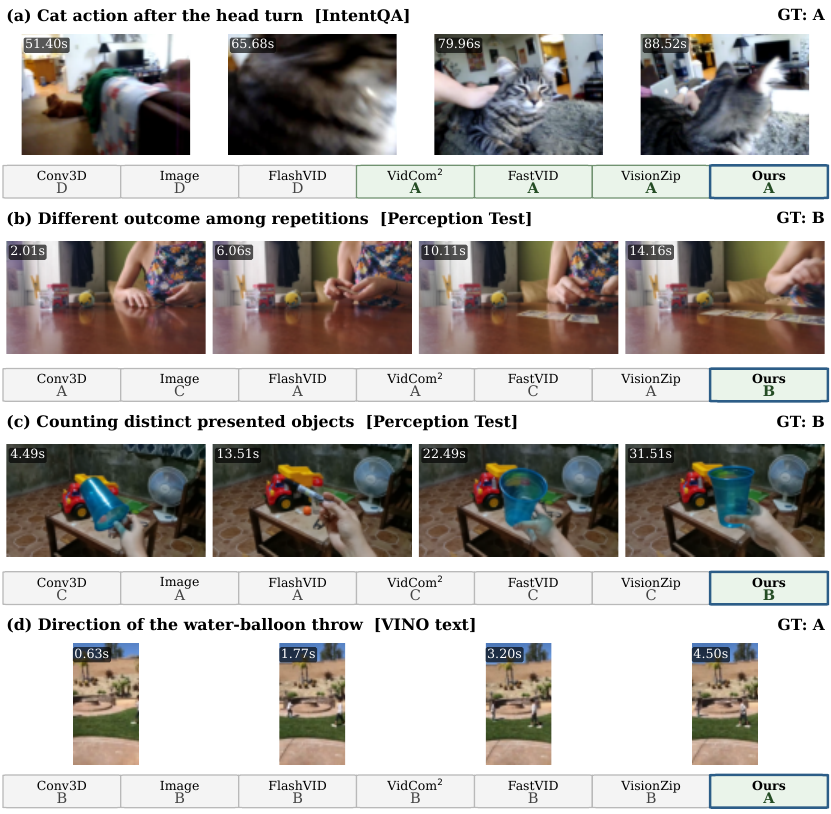}
  \vspace{-6mm}
  \caption{\textbf{Representative qualitative successes (II).}
  The examples cover late evidence, repeated outcomes, object counting, and
  directional temporal reasoning. In (a), the decisive event occurs near the
  end of the clip when the cat approaches and licks the hand. In (b), the
  answer requires comparing repeated trials and identifying the one with a
  different outcome. In (c), the model must avoid double-counting repeated
  views of objects. In (d), temporal order and spatial roles determine the
  throw direction. Ours predicts the ground-truth option in all four cases.}
  \vspace{-4mm}
  \label{fig:qualitative-success-2}
\end{figure}

\begin{figure}[t!]
    \centering
    \includegraphics[width=\linewidth]{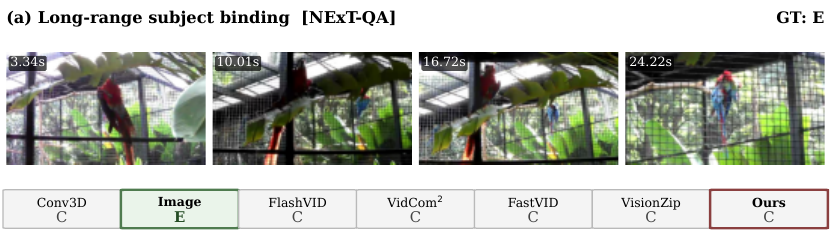}
    \vspace{-6mm}
    \caption{\textbf{Representative failure case: long-range subject binding.}
    The question asks what the farther parrot does after spreading its wing near the end of the clip. The full Image pathway predicts the correct answer (\textbf{E}: continuing to clean itself), whereas Ours, Video/Conv3D, and all compact baselines shown here predict \textbf{C}. The example suggests that aggressive visual compression can make it difficult to preserve the identity of the queried subject over a long temporal span when several visually similar instances are present.}
    \label{fig:qualitative-failure}
    \vspace{-4mm}
\end{figure}

\textbf{The advantage persists across budgets.}
At 768 tokens, our method obtains $\mathrm{Avg.}(13)$ scores of 56.63, 60.15,
and 64.15 on Qwen2-VL-7B, Qwen3-VL-8B, and Qwen3-VL-32B, respectively.
These exceed the strongest prior compression baseline at the same budget by
2.83, 3.35, and 3.65 points. At the 1.5K budget, the corresponding gains are
2.55, 2.89, and 2.42 points, and at 3K they remain 2.66, 2.75, and 2.75
points. The relative advantage therefore persists over a $4\times$ range of
output budgets.

\textbf{A smaller coordinated representation can outperform larger baselines.}
The 1.5K operating point of our method also outperforms every prior method
evaluated at the 3K budget. Ours reaches 59.43, 62.75, and 66.28, compared
with the strongest 3K baselines at 58.45, 61.50, and 65.10, respectively.
The corresponding end-to-end latencies are 215.5 vs.\ 405.2\,ms,
260.5 vs.\ 455.2\,ms, and 421.8 vs.\ 685.2\,ms.

\textbf{Returns diminish beyond 1.5K tokens.}
Increasing our budget from 768 to approximately 1.5K tokens improves
$\mathrm{Avg.}(13)$ by 2.80, 2.60, and 2.13 points across the three
backbones. Doubling the budget again to approximately 3K yields smaller gains
of 1.68, 1.50, and 1.57 points while increasing latency. This supports the
1.5K setting as a practical accuracy--efficiency operating point.


\section{Qualitative Analysis}
\label{app:qualitative}

We complement the aggregate benchmark results with qualitative examples from Qwen3-VL-8B. The examples include eight representative successes and one representative failure case. They are intended to illustrate the kinds of visual evidence that matter under a compact budget rather than to estimate how frequently a particular behavior occurs.

\noindent\textbf{Visualization protocol.} For each example, we show four representative frames from the 32-frame input together with their timestamps and the parsed multiple-choice prediction of every evaluated method. Green cells indicate correct predictions, and our method is additionally outlined for identification. The displayed frames are selected only for visualization; they should not be interpreted as a reconstruction of any method's internal token mask or retained-token support. We include cases where other compression methods also succeed, rather than restricting the visualization to exclusive wins.

\noindent\textbf{What the successful cases illustrate.} The examples suggest several recurring evidence patterns. Fine-grained local change matters when the answer depends on a small motion rather than a dominant object or pose (Fig.~\ref{fig:qualitative-success-1}a). Relational evidence matters when two observations must be interpreted jointly, as in bilateral shoe-lace interaction, the failed egg-breaking attempt, and the water-balloon direction (Fig.~\ref{fig:qualitative-success-1}b,d; Fig.~\ref{fig:qualitative-success-2}d). Long-range coverage is useful when the decisive evidence occurs late or when outcomes must be compared across repeated actions (Fig.~\ref{fig:qualitative-success-2}a,b). Finally, repeated views should not be confused with distinct evidence, as illustrated by the object-counting example in Fig.~\ref{fig:qualitative-success-2}c. These examples are consistent with the intended role of coordinated selection: allocate the limited output budget to complementary source evidence rather than to individually salient but redundant observations.

\noindent\textbf{Failure analysis.} This case stresses long-range subject binding rather than local appearance recognition. The model must associate a late action with the correct parrot among multiple visually similar birds, foliage, and cage structure. Here, only the full Image pathway retains enough evidence to recover the correct option, while all compressed pathways shown in Fig.~\ref{fig:qualitative-failure} converge to the same distractor. This result indicates a limitation of compact visual selection when the answer depends on maintaining instance identity across a long temporal span. It also motivates identity-aware retrieval and longer-range temporal state tracking as complementary directions for future refinement.

\setlength{\parskip}{3pt plus 1pt}
\allowdisplaybreaks[2]
\everydisplay\expandafter{\the\everydisplay
  \abovedisplayskip=9pt plus 2pt minus 2pt
  \belowdisplayskip=9pt plus 2pt minus 2pt
  \abovedisplayshortskip=0pt plus 2pt
  \belowdisplayshortskip=5pt plus 2pt minus 2pt
}
\section{Coordinated Selection: Objective and Analysis}
\label{app:selection}

This appendix gives the details behind the determinantal diversity term,
the adjacent-frame pair reward, and the feasibility of the greedy selection
procedure in Section~\ref{sec:coordinated_selection}. The analysis concerns
the selection operators themselves and does not assume that the resulting
subset is globally optimal.

\noindent\textbf{Determinantal marginal gain.} Consider one frame and suppress the frame index. Let
$L\succeq 0$ be its DPP kernel, let $U$ be the currently selected candidates
from that frame, and let $i\notin U$. Whenever $L_{UU}$ is positive definite,
the determinant of the enlarged principal block satisfies
\begin{equation}
\det L_{U\cup\{i\},U\cup\{i\}}
=
\det L_{UU}
\left(
L_{ii}-L_{iU}L_{UU}^{-1}L_{Ui}
\right).
\label{eq:app_dpp_schur}
\end{equation}
Consequently,
\begin{align}
\Delta_i^{\mathrm{DPP}}(U)
&=
\log\det L_{U\cup\{i\},U\cup\{i\}}
-\log\det L_{UU}
\nonumber\\*
&=
\log\!\left(
L_{ii}-L_{iU}L_{UU}^{-1}L_{Ui}
\right).
\label{eq:app_dpp_marginal}
\end{align}

\begin{proof}
After ordering the indices with those in $U$ first and $i$ last, the enlarged
principal block is
\[
L_{U\cup\{i\},U\cup\{i\}}
=
\begin{bmatrix}
L_{UU} & L_{Ui}\\
L_{iU} & L_{ii}
\end{bmatrix}.
\]
Since $L_{UU}$ is invertible, the block-determinant identity gives
\[
\det
\begin{bmatrix}
L_{UU} & L_{Ui}\\
L_{iU} & L_{ii}
\end{bmatrix}
=
\det L_{UU}
\det\!\left(
L_{ii}-L_{iU}L_{UU}^{-1}L_{Ui}
\right).
\]
The Schur complement is scalar, yielding
Eq.~\ref{eq:app_dpp_schur}. Taking logarithms and subtracting
$\log\det L_{UU}$ proves Eq.~\ref{eq:app_dpp_marginal}.
\end{proof}

Because $L\succeq0$, the Schur complement is nonnegative whenever the inverse
exists. If the enlarged principal block is positive definite, it is strictly
positive. The expression therefore measures the additional volume contributed
by candidate $i$ after accounting for the directions already represented by
$U$. A candidate nearly contained in the span represented by $U$ receives a
small marginal gain even when its individual quality is high.

\noindent\textbf{Empty and degenerate subsets.} For $U=\varnothing$, we use the standard convention
$\det L_{\varnothing,\varnothing}=1$. The first marginal is therefore
\begin{equation}
\Delta_i^{\mathrm{DPP}}(\varnothing)=\log L_{ii},
\label{eq:app_dpp_empty}
\end{equation}
whenever $L_{ii}>0$.

Eq.~\ref{eq:app_dpp_marginal} is written in inverse form only when
$L_{UU}$ is nonsingular. If a kernel contains degenerate directions, the same
analysis can be applied to the strictly positive definite kernel
\begin{equation}
L^{(\varepsilon)}=L+\varepsilon I,
\qquad \varepsilon>0,
\label{eq:app_dpp_regularized}
\end{equation}
for which every principal block is invertible. The determinant and
Schur-complement identities then hold exactly with $L^{(\varepsilon)}$ in
place of $L$. Alternatively, when the current principal determinant is
positive but adding $i$ makes the enlarged determinant zero, its unregularized
log-determinant marginal is $-\infty$, reflecting that the candidate adds no
new determinantal volume. None of the arguments below depends on a particular
choice of numerical regularization.

\noindent\textbf{Adjacent-frame complementarity.} The DPP term discourages redundant directions within a frame, but visual
similarity across neighboring frames does not necessarily imply redundancy.
To formalize the complementary pair term, let $E$ be the sparse
correspondence graph and let $w_{ij}\geq0$ be the reward associated with
edge $\{i,j\}\in E$. Define
\begin{equation}
G(S)
=
\sum_{\substack{\{i,j\}\in E\\ i,j\in S}}
w_{ij}.
\label{eq:app_pair_objective}
\end{equation}
For an unselected candidate $i\notin S$, its marginal contribution is
\begin{align}
G(S\cup\{i\})-G(S)
&=
\sum_{j\in S:\{i,j\}\in E}w_{ij}
\nonumber\\*
&=
\Delta_i^{\mathrm{pair}}(S),
\label{eq:app_pair_marginal}
\end{align}
which recovers the pair reward used in the main text.

Because all $w_{ij}$ are nonnegative, this marginal exhibits
complementarity. If $S\subseteq S'$ and $i\notin S'$, then
\begin{equation}
\Delta_i^{\mathrm{pair}}(S)
\leq
\Delta_i^{\mathrm{pair}}(S').
\label{eq:app_pair_increasing}
\end{equation}
Thus selecting one endpoint can increase the value of selecting its
corresponding endpoint later.

\begin{proof}
The set of neighbors of $i$ contained in $S$ is a subset of those contained
in $S'$. Therefore
\[
\sum_{j\in S:\{i,j\}\in E}w_{ij}
\leq
\sum_{j\in S':\{i,j\}\in E}w_{ij},
\]
because every added term is nonnegative.
\end{proof}

Eq.~\ref{eq:app_pair_increasing} is the opposite direction from the
diminishing-returns condition of a submodular set function. In particular,
$G$ is supermodular. This also shows why adding the pair reward to a diversity
criterion does not, in general, preserve submodularity.

A two-candidate example makes the point explicit. Let $i$ and $j$ be joined
by a single edge with $w_{ij}>0$, and take a DPP kernel $L=I$, for which the
log-determinant contribution is zero for every selected subset. Then
\[
G(\varnothing)=G(\{i\})=G(\{j\})=0,
\qquad
G(\{i,j\})=w_{ij}.
\]
The marginal value of adding $i$ is therefore zero when starting from
$\varnothing$ but $w_{ij}$ after $j$ has been selected:
\[
G(\{i\})-G(\varnothing)
<
G(\{i,j\})-G(\{j\}).
\]
Hence the combined selection score need not satisfy a standard submodular
diminishing-returns property, and we make no corresponding approximation
guarantee for the greedy procedure.

\noindent\textbf{Quota-constrained feasible moves.} Let $I_t$ denote the candidates belonging to frame $t$, and let the prescribed
integer quotas satisfy
\begin{equation}
0\leq b_t\leq N_t,
\qquad
\sum_{t=1}^{T}b_t=B.
\label{eq:app_quota_constraints}
\end{equation}
At a current subset $S$, a singleton or pair move $U$ is feasible only if

\begin{equation}
U\cap S=\varnothing,\qquad
|S\cup U|\leq B,\qquad
|(S\cup U)\cap I_t|\leq b_t
\quad\text{for every }t.
\label{eq:app_feasible_move}
\end{equation}

The pair moves allow the selector to retain correspondence endpoints jointly,
while singleton moves ensure that the quota constraints can always be
completed.

\noindent\textbf{Termination at exactly $B$ anchors.} Assume Eq.~\ref{eq:app_quota_constraints} and that
$\operatorname{FeasibleMoves}$ includes every unselected singleton satisfying
Eq.~\ref{eq:app_feasible_move}. Starting from $S=\varnothing$, the
loop in Algorithm~\ref{alg:image_first} cannot terminate or become stuck with
$|S|<B$ and reaches exactly $B$ distinct selected indices.

\begin{proof}
Suppose $|S|<B$. Since
\[
\sum_{t=1}^{T}b_t=B,
\]
it cannot be the case that $|S\cap I_t|=b_t$ for every frame; otherwise
summing over frames would give $|S|=B$. Hence there exists at least one frame
$t$ such that
\[
|S\cap I_t|<b_t.
\]
Because $b_t\leq N_t=|I_t|$, this frame contains at least one unselected
candidate $i\in I_t\setminus S$. The singleton move $\{i\}$ respects the
frame quota. It also respects the global budget because $|S|<B$.
Therefore at least one feasible move exists whenever the budget is not yet
filled.

Each accepted move contains only previously unselected candidates, so
$|S|$ strictly increases. Eq.~\ref{eq:app_feasible_move} prevents a
move from exceeding the global budget or any frame quota. In particular,
when only one global slot remains, no two-element pair is feasible and a
singleton remains available by the preceding argument. Since the budget is
finite, the procedure therefore reaches $|S|=B$ after finitely many moves.
All selected indices are distinct by construction.
\end{proof}

This result establishes feasibility only. The move chosen by
$\operatorname{GreedyMove}$ depends on the configured relevance, diversity,
and correspondence scores, and the complementarity term above prevents a
general claim of global optimality. The role of the procedure is instead to
construct exactly $B$ source-aligned anchors while allowing the value of a
candidate to depend on what has already been retained.

\section{Proof of the Conditional Prediction-Error Decomposition}
\label{app:information}

We prove Proposition~\ref{prop:information} and the monotonicity statement
for additional context. Let $\mathcal Y$ denote the finite output vocabulary.
For $y\in\mathcal Y$, write
\[
P_y=p_F(y),\qquad
Q_y=p_\theta(y\mid\mathcal O),\qquad
\bar P_y=\bar p_{\mathcal O}(y)
       =\mathbb E[P_y\mid\mathcal O].
\]
Because conditional expectation is taken coordinatewise,
$\bar p_{\mathcal O}$ is itself a probability distribution:
$\bar P_y\geq0$ and
$\sum_{y\in\mathcal Y}\bar P_y=1$ almost surely.

\begin{proof}[Proof of Proposition~\ref{prop:information}]
Using the standard convention for zero-probability terms, condition on
$\mathcal O$. For every coordinate with $\bar P_y=0$, nonnegativity of
$P_y$ and
$\mathbb E[P_y\mid\mathcal O]=0$
imply that $P_y=0$ almost surely conditional on $\mathcal O$, so that
coordinate contributes zero. On the remaining coordinates,
\begin{align}
&\mathbb E\!\left[
D_{\mathrm{KL}}(p_F\|p_\theta)
-
D_{\mathrm{KL}}(p_F\|\bar p_{\mathcal O})
\,\middle|\,\mathcal O
\right]
\nonumber\\
&\qquad=
\mathbb E\!\left[
\sum_{y\in\mathcal Y}
P_y\log\frac{\bar P_y}{Q_y}
\,\middle|\,\mathcal O
\right]
\nonumber\\
&\qquad=
\sum_{y\in\mathcal Y}
\mathbb E[P_y\mid\mathcal O]
\log\frac{\bar P_y}{Q_y}
\nonumber\\
&\qquad=
\sum_{y\in\mathcal Y}
\bar P_y\log\frac{\bar P_y}{Q_y}
=
D_{\mathrm{KL}}(\bar p_{\mathcal O}\|p_\theta).
\label{eq:conditional-kl-proof}
\end{align}
Taking expectation over $\mathcal O$ gives
\[
\mathbb E\,D_{\mathrm{KL}}(p_F\|p_\theta)
=
\mathbb E\,D_{\mathrm{KL}}
(p_F\|\bar p_{\mathcal O})
+
\mathbb E\,D_{\mathrm{KL}}
(\bar p_{\mathcal O}\|p_\theta),
\]
which is Eq.~\ref{eq:kl_decomposition}.

Since KL divergence is nonnegative,
\[
\mathbb E\,D_{\mathrm{KL}}(p_F\|p_\theta)
\geq
\mathbb E\,D_{\mathrm{KL}}
(p_F\|\bar p_{\mathcal O})
=
\mathcal I(\mathcal O).
\]
Equality is attained by the predictor
$p_\theta(\cdot\mid\mathcal O)=\bar p_{\mathcal O}$.
Hence
\[
\mathcal I(\mathcal O)
=
\inf_{p_\theta\ \mathcal O\text{-measurable}}
\mathbb E\,D_{\mathrm{KL}}(p_F\|p_\theta).
\]
\end{proof}

\noindent\textbf{Additional context can only reduce the information floor.} Let $\mathcal O^+=(\mathcal O,\mathcal C)$ be an enriched record and define
\[
\bar p_{\mathcal O^+}
=
\mathbb E[p_F\mid\mathcal O^+].
\]
Because $\mathcal O$ is contained in $\mathcal O^+$,
$\bar p_{\mathcal O}$ is also measurable with respect to
$\mathcal O^+$. Applying Proposition~\ref{prop:information} with
$\mathcal O^+$ as the observed record and
$\bar p_{\mathcal O}$ as the predictor gives
\begin{align}
\mathcal I(\mathcal O)
&=
\mathbb E\,D_{\mathrm{KL}}
(p_F\|\bar p_{\mathcal O})
\nonumber\\
&=
\mathbb E\,D_{\mathrm{KL}}
(p_F\|\bar p_{\mathcal O^+})
+
\mathbb E\,D_{\mathrm{KL}}
(\bar p_{\mathcal O^+}\|\bar p_{\mathcal O})
\nonumber\\
&=
\mathcal I(\mathcal O^+)
+
\mathbb E\,D_{\mathrm{KL}}
(\bar p_{\mathcal O^+}\|\bar p_{\mathcal O}).
\label{eq:context-information-identity}
\end{align}
Therefore
\[
\mathcal I(\mathcal O^+)\leq\mathcal I(\mathcal O),
\]
with equality if and only if
$\bar p_{\mathcal O^+}=\bar p_{\mathcal O}$ almost surely on the relevant
support. Thus additional context lowers the irreducible teacher-prediction
gap exactly to the extent that it changes the conditional teacher
distribution.

\section{Proof of the Selection--Interaction Identity}
\label{app:cut}

We prove Proposition~\ref{prop:cut} and the worst-case residual in
Eq.~\ref{eq:cut_minimax}. For any index set $R$, let $P_R$ be its
row-selection matrix, and write
\[
\mathbf Z_R=P_R\mathbf Z,
\qquad
A_{RQ}=P_RAP_Q^\top.
\]

\begin{proof}[Proof of Proposition~\ref{prop:cut}]
Because $S$ and $\bar S$ partition the $M$ candidate indices,
\[
P_S^\top P_S+P_{\bar S}^\top P_{\bar S}=I_M,
\]
and therefore
\[
\mathbf Z
=
P_S^\top\mathbf Z_S
+
P_{\bar S}^\top\mathbf Z_{\bar S}.
\]
Expanding the interact-then-select operator gives
\begin{align}
P_S(I+A)\mathbf Z
&=
\mathbf Z_S+P_SA\mathbf Z
\nonumber\\
&=
\mathbf Z_S
+
P_SAP_S^\top\mathbf Z_S
+
P_SAP_{\bar S}^\top\mathbf Z_{\bar S}
\nonumber\\
&=
(I_B+A_{SS})\mathbf Z_S
+
A_{S\bar S}\mathbf Z_{\bar S}.
\end{align}
Subtracting the select-then-interact term yields
\[
P_S(I+A)\mathbf Z
-
(I_B+A_{SS})\mathbf Z_S
=
A_{S\bar S}\mathbf Z_{\bar S},
\]
which proves Eq.~\ref{eq:cut_identity}.

If $A_{S\bar S}=0$, the two orders clearly agree for every
$\mathbf Z$. Conversely, suppose they agree for every $\mathbf Z$. Then
\[
A_{S\bar S}\mathbf Z_{\bar S}=0
\]
for every possible $\mathbf Z_{\bar S}$. Choosing a matrix whose one
feature column is an arbitrary vector
$v\in\mathbb R^{|\bar S|}$ and whose remaining columns are zero gives
$A_{S\bar S}v=0$ for every $v$. Hence
$A_{S\bar S}=0$.
\end{proof}

\noindent\textbf{Partial context retrieval.} Let $C\subseteq\bar S$ denote the retrieved context indices and let
\[
U=\bar S\setminus C
\]
be the remaining unobserved indices. The full selected output after applying
the fixed interaction operator can be written as
\begin{equation}
P_S(I+A)\mathbf Z
=
(I_B+A_{SS})\mathbf Z_S
+
A_{SC}\mathbf Z_C
+
A_{SU}\mathbf Z_U.
\label{eq:full-interaction-partition}
\end{equation}
If the selected representation has access to $S$ and $C$ but not $U$, then
the term attributable to the unavailable indices is exactly
\[
E_U=A_{SU}\mathbf Z_U.
\]

We now compute its worst-case magnitude under the constraint
$\|\mathbf Z_U\|_F\leq R$. By submultiplicativity of the Frobenius norm,
\[
\|A_{SU}\mathbf Z_U\|_F
\leq
\|A_{SU}\|_2\|\mathbf Z_U\|_F
\leq
R\|A_{SU}\|_2.
\]
Thus
\[
\sup_{\|\mathbf Z_U\|_F\leq R}
\|A_{SU}\mathbf Z_U\|_F
\leq
R\|A_{SU}\|_2.
\]

To show that the bound is attained, let
$v\in\mathbb R^{|U|}$ be a unit right singular vector of $A_{SU}$
associated with its largest singular value
$\sigma_{\max}(A_{SU})=\|A_{SU}\|_2$, and let
$e\in\mathbb R^D$ be any unit vector. Set
\[
\mathbf Z_U=Rve^\top.
\]
Then
\[
\|\mathbf Z_U\|_F
=
R\|v\|_2\|e\|_2
=
R,
\]
while
\begin{align}
\|A_{SU}\mathbf Z_U\|_F
&=
R\|A_{SU}ve^\top\|_F \\
&=
R\|A_{SU}v\|_2\|e\|_2 \\
&=
R\|A_{SU}\|_2.
\end{align}
Hence
\begin{equation}
\sup_{\|\mathbf Z_U\|_F\leq R}
\|A_{SU}\mathbf Z_U\|_F
=
R\|A_{SU}\|_2.
\end{equation}

This result is conditional on the fixed operator $A$, the fixed selection
set $S$, and the retrieved context set $C$. It characterizes the largest
interaction contribution hidden behind the selection boundary under a
norm-bounded unobserved feature matrix; it is not a bound on downstream
answer accuracy or on an adaptive nonlinear video encoder.

\section{Temporal Refinement and Learning Objective}
\label{app:refinement}

This appendix gives additional details for the temporal refinement module in
Section~\ref{sec:temporal_refinement}. The purpose of refinement is to let
each selected source anchor read temporally relevant evidence without
increasing the number of visual tokens forwarded to the language model.
Throughout refinement, the selected index set $S$ and its source metadata
remain fixed.

\noindent\textbf{Context retrieval.} For each selected anchor $i\in S$, let $C(i)$ denote a set of retrieved
tokens from neighboring frames. The context set is not restricted to the
selected anchors:
\begin{equation}
  C(i)\subseteq \{1,\ldots,M\}\setminus\{i\},
  \qquad
  C(i)\not\subseteq S
  \ \text{in general}.
  \label{eq:app_context_set}
\end{equation}
Thus selection determines which tokens occupy the output sequence, whereas
context retrieval determines which additional tokens may be read while
constructing those outputs. Tokens used only as context do not become new
language-model input tokens.

For a selected anchor $i$ and context token $j\in C(i)$, the refinement
module uses their visual features together with relative temporal and spatial
information to produce normalized attention weights
$\alpha_{ij}$. We require
\begin{equation}
  \alpha_{ij}\geq 0,
  \qquad
  \sum_{j\in C(i)}\alpha_{ij}=1.
  \label{eq:app_attention_normalization}
\end{equation}
The aggregated context for anchor $i$ is
\begin{equation}
  \mathbf c_i
  =
  \sum_{j\in C(i)}
  \alpha_{ij}W_V\mathbf z_j,
  \label{eq:app_context_aggregation}
\end{equation}
where $W_V$ is the value projection used by the bottleneck attention module.

\noindent\textbf{Residual refinement.} The selected anchor is updated through the gated residual branch
$G_\theta$:
\begin{equation}
  \widehat{\mathbf z}_i
  =
  \mathbf z_i
  +
  G_\theta(\mathbf z_i,\mathbf c_i),
  \qquad i\in S.
  \label{eq:app_refinement}
\end{equation}
This is the same update as Eq.~\ref{eq:refinement}, with
Eq.~\ref{eq:app_context_aggregation} making the retrieved context
explicit.

The residual branch is gated and its output is bounded by construction.
Writing its bound abstractly as $\rho$, the update satisfies
\begin{equation}
  \bigl\|
  \widehat{\mathbf z}_i-\mathbf z_i
  \bigr\|
  =
  \bigl\|
  G_\theta(\mathbf z_i,\mathbf c_i)
  \bigr\|
  \leq \rho.
  \label{eq:app_residual_bound}
\end{equation}
The particular parameterization of the gate does not affect the structural
property used here: refinement modifies the content associated with an
existing anchor rather than creating additional output anchors.

\noindent\textbf{Zero-expansion interface.} Let
\[
\widehat{\mathbf Z}_S
=
\operatorname{Concat}
\left(
\widehat{\mathbf z}_i:i\in S
\right).
\]
Since each of the $B$ selected anchors produces exactly one refined output,
\begin{equation}
  \widehat{\mathbf Z}_S
  \in
  \mathbb R^{B\times D}.
  \label{eq:app_zero_expansion}
\end{equation}
The source positions and intermediate feature streams remain
\begin{equation}
  \widehat{\mathbf P}_S=\mathbf P_S,
  \qquad
  \widehat{\mathbf D}_S=\mathbf D_S.
  \label{eq:app_source_metadata}
\end{equation}
Hence refinement is a $B\!\to\!B$ operation at the visual output interface,
even though its computation may read tokens outside $S$.

This distinction is important. The dense image pool can serve as temporary
context memory without becoming the sequence supplied to the language model.
Zero expansion therefore refers to output-token count, not to the amount of
computation or memory used while producing the refined anchors.

\noindent\textbf{Relation to the interaction boundary.} Proposition~\ref{prop:cut} shows that selecting before interaction omits
cross-boundary terms of the form
$A_{S\bar S}\mathbf Z_{\bar S}$ in the fixed linear diagnostic model.
Retrieving context $C\subseteq\bar S$ restores access to the corresponding
component
\[
A_{SC}\mathbf Z_C,
\]
while the remaining unavailable component is associated with
$U=\bar S\setminus C$.

The learned refiner is not assumed to implement this linear operator exactly.
The proposition instead motivates its access pattern: selected anchors should
be allowed to read relevant unselected context when useful. In this sense,
selection fixes the output support, while refinement relaxes the interaction
boundary induced by that support.

\noindent\textbf{Relation to the information decomposition.} The same distinction can be expressed through
Proposition~\ref{prop:information}. Before refinement, the selected record is
\[
\mathcal O
=
(q,S,\mathbf Z_S,\mathbf P_S,\mathbf D_S).
\]
After retrieving context $\mathcal C$, the available record becomes
\[
\mathcal O^+
=
(\mathcal O,\mathcal C).
\]
As shown in Eq.~\ref{eq:context_information_identity},
\begin{equation}
  \mathcal I(\mathcal O)
  =
  \mathcal I(\mathcal O^+)
  +
  \mathbb E
  D_{\mathrm{KL}}
  \left(
    \bar p_{\mathcal O^+}
    \|
    \bar p_{\mathcal O}
  \right).
  \label{eq:app_refinement_information}
\end{equation}
Thus context retrieval can reduce the information floor by making additional
evidence observable. The residual branch then determines how effectively the
available record is converted into a prediction. More expressive refinement
can improve the latter operation, but it cannot recover evidence that neither
selection nor context retrieval exposes.

\textbf{Learning objective.}
Only the temporal refiner parameters $\theta$ are optimized. The image
encoder, language model, and selector remain frozen. Let
$p_\theta(\cdot\mid\mathcal V,q)$ denote the selected-and-refined pathway's
first-answer-token distribution, and let
$p_F(\cdot\mid\mathcal V,q)$ denote the frozen Full Image
first-answer-token distribution.

Training combines answer supervision and teacher distillation:
\begin{equation}
  \mathcal L(\theta)
  =
  \mathcal L_{\mathrm{ans}}(\theta)
  +
  \beta\,\mathcal L_{\mathrm{KD}}(\theta),
  \label{eq:app_training_objective}
\end{equation}
where $\beta$ controls the relative contribution of the distillation term.
At temperature $\tau$, write
$p_F^{(\tau)}$ and $p_\theta^{(\tau)}$ for the corresponding softened
distributions. The distillation term is
\begin{equation}
  \mathcal L_{\mathrm{KD}}(\theta)
  =
  D_{\mathrm{KL}}
  \left(
    p_F^{(\tau)}
    \|
    p_\theta^{(\tau)}
  \right).
  \label{eq:app_kd_objective}
\end{equation}
The Full Image pathway is used as a teacher only during training; it is not
required at inference.

\noindent\textbf{Initialization and the training-free limit.} The residual branch is initialized so that its initial output is zero:
\begin{equation}
  G_{\theta_0}(\mathbf z_i,\mathbf c_i)=0.
  \label{eq:app_zero_initialization}
\end{equation}
Consequently,
\begin{equation}
  \widehat{\mathbf z}_i
  =
  \mathbf z_i
  \qquad
  \text{at initialization}.
  \label{eq:app_initial_identity}
\end{equation}
Optimization therefore begins from the unrefined selected-token pathway
rather than from an arbitrary modification of the retained image features.
\begin{algorithm}[tbp]
  \caption{\textbf{Image-first encoding with a fixed output budget.}}
  \vspace{1mm}\label{alg:image_first}
  \centering
  \begin{tabularx}{\linewidth}{@{}r@{\hspace{0.8em}}X@{}}
    \toprule
    \multicolumn{2}{@{}p{\linewidth}@{}}{
    \textbf{Input:} video $\mathcal V$, question $q$, budget $B$,
frozen image encoder, frozen allocator, optional refiner $G_\theta$.}\\
    \midrule
    1 &
    \IFrep{Encode every frame independently}; collect
    $(\mathbf Z,\mathbf P,\mathbf D)$.\\
    2 &
    \IFalloc{Inherit quotas} $b_t$ from the frozen allocator, with
$0\leq b_t\leq N_t$ and $\sum_t b_t=B$.\\
    3 &
    Build question-weighted DPP kernels $\{L_t\}$ and pair graph $(E,w)$.\\
    4 &
    $S\gets\emptyset$.\\
    5 &
    \textbf{while} $|S|<B$ \textbf{do}\\
    6 &
    \hspace*{1em}
    $\mathcal U\gets
    \operatorname{FeasibleMoves}(S,\{b_t\},E)$.\\
    7 &
    \hspace*{1em}
    $U\gets
    \operatorname{GreedyMove}(\mathcal U;S,\{L_t\},w)$.\\
    8 &
    \hspace*{1em}
    $S\gets S\cup U$.\\
    9 &
    \textbf{end while}\\
    10 &
    Sort $S$ by source time and spatial position.\\
    11 &
    $(\widehat{\mathbf Z}_S,\mathbf P_S,\mathbf D_S)
    \gets
    (\mathbf Z[S],\mathbf P[S],\mathbf D[S])$.\\
    12 &
    \textbf{if} temporal refinement is enabled \textbf{then}\\
    13 &
    \hspace*{1em}
    \textbf{for each} selected anchor $i\in S$ \textbf{do}\\
    14 &
    \hspace*{2em}
    \IFinter{Retrieve context} $C(i)$ from the unchanged dense pool
    $\mathbf Z$.\\
    15 &
    \hspace*{2em}
    \IFinter{Write one residual update} using
    Eq.~\ref{eq:refinement}.\\
    16 &
    \hspace*{1em}
    \textbf{end for}\\
    17 &
    \textbf{end if}\\
    18 &
    \textbf{return}
    $(\widehat{\mathbf Z}_S,\mathbf P_S,\mathbf D_S)$:
    exactly $B$ source-aligned tokens.\\
    \bottomrule
  \end{tabularx}
  
\end{algorithm}

Disabling the residual branch gives the exact training-free limit
\begin{equation}
  \widehat{\mathbf Z}_S=\mathbf Z_S.
  \label{eq:app_no_refinement}
\end{equation}
The coordinated selector and the fixed-budget source interface remain
unchanged. The learned component is therefore confined to temporal
refinement: representation and selection do not require additional training.

\noindent\textbf{Scope of the refinement claim.} The refinement module does not claim to reconstruct the discarded dense image
sequence or reproduce the native video encoder. Its role is narrower: it
allows a fixed set of source-aligned anchors to incorporate selected temporal
context while preserving the output-token budget and source interface.
Accordingly, the $B\!\to\!B$ property is a statement about the representation
passed to the language model, not a claim that context retrieval and
refinement incur zero computational cost.

\section{Complete Selector Specification and Hyperparameters}
\label{app:selector_spec}

The main text presents the selector at the level of design principles. This section gives the implemen\-tation-\allowbreak{}consistent specification used in all experiments. In particular, it clarifies the question-relevance score, the per-frame DPP kernel, the inherited frame quotas, the adjacent-frame correspondence graph, the exact greedy move scores, and the context size used by the temporal refiner. Unless otherwise noted, all quantities below are fixed at inference time and are shared across backbones.

\subsection{Question-Conditioned Relevance}
\label{app:selector_relevance}

For each question, we encode the complete question together with up to eight question/option fragments. Let $Q_r$ denote the input-token indices of text fragment $r$. We form one vector per fragment by averaging the corresponding language-model input embeddings,
\begin{equation}
q_r = \frac{1}{|Q_r|}\sum_{k\in Q_r} e_k .
\label{eq:selector_text_embedding}
\end{equation}
For a visual candidate $z_{ti}$ from frame $t$, both the visual feature and text vector are $\ell_2$-normalized,
\begin{equation}
\bar z_{ti}=\frac{z_{ti}}{\|z_{ti}\|_2},
\qquad
\bar q_r=\frac{q_r}{\|q_r\|_2}.
\label{eq:selector_normalization}
\end{equation}
The question relevance used by the implementation is the maximum cosine similarity across the encoded text fragments,
\begin{equation}
m_{ti}(q)=\max_r \bar z_{ti}^{\top}\bar q_r.
\label{eq:selector_question_relevance}
\end{equation}
Because no additional clipping is applied, $m_{ti}(q)\in[-1,1]$. This is the range used by the implementation.

\subsection{Token Quality and Per-Frame DPP Kernel}
\label{app:selector_dpp}

The unnormalized quality of candidate $i$ in frame $t$ is
\begin{equation}
\widetilde a_{ti}
=
\max\!\left\{
10^{-6},
\|z_{ti}\|_2\left[1+\eta m_{ti}(q)\right]
\right\},
\qquad \eta=0.8.
\label{eq:selector_quality_raw}
\end{equation}
We normalize the qualities by their mean within each frame,
\begin{equation}
a_{ti}
=
\frac{\widetilde a_{ti}}
{\frac{1}{N_t}\sum_j \widetilde a_{tj}}.
\label{eq:selector_quality_normalized}
\end{equation}
The DPP kernel for frame $t$ is then
\begin{equation}
L_t(i,j)
=
a_{ti}a_{tj}\,\bar z_{ti}^{\top}\bar z_{tj}
+
\epsilon_t\mathbf{1}[i=j],
\label{eq:selector_dpp_kernel}
\end{equation}
with
\begin{equation}
\epsilon_t
=
10^{-3}\frac{1}{N_t}\sum_i a_{ti}^{2}.
\label{eq:selector_dpp_jitter}
\end{equation}
The diagonal term is a numerical stabilizer. Selection within each frame is therefore driven jointly by candidate quality and diversity in the normalized visual-feature space.

\subsection{Frame Quotas Inherited from the Frozen Allocator}
\label{app:selector_quota}

The implementation does not compute a new frame allocation from a separate softmax or learned quota predictor. Instead, it inherits the non-uniform frame quotas from the frozen upstream SITE allocator and optimizes only the source-token identities under those quotas.

The SITE allocator first returns $B$ compressed slots together with one representative source-token index $r_\ell$ for each slot $\ell$. In the implementation, each sampled frame contributes the same number $N$ of source candidates. Using one-based frame indices for exposition, the source frame of representative $r_\ell$ is
\begin{equation}
\operatorname{frame}(r_\ell)
=
1+\left\lfloor\frac{r_\ell}{N}\right\rfloor,
\label{eq:selector_rep_frame}
\end{equation}
where $r_\ell$ is the zero-based global source index stored by the implementation. The representative count for frame $t$ is
\begin{equation}
c_t
=
\sum_{\ell=1}^{B}
\mathbf{1}\!\left[\operatorname{frame}(r_\ell)=t\right].
\label{eq:selector_rep_count}
\end{equation}
The initial quota is
\begin{equation}
b_t=\min\{c_t,N\}.
\label{eq:selector_initial_quota}
\end{equation}
If clipping causes $\sum_t b_t<B$, the remaining slots are assigned iteratively to frames with the largest residual capacity $N-b_t$ until
\begin{equation}
\sum_t b_t=B.
\label{eq:selector_quota_sum}
\end{equation}
Thus, our coordinated selector preserves the frame-level budget profile
produced by the frozen allocator while changing which source tokens occupy
those slots. Table~\ref{tab:site_allocator_hparams} lists the frozen allocator
configuration used in all experiments.

\begin{table}[t]
\caption{\textbf{Frozen SITE allocator configuration.}}
\label{tab:site_allocator_hparams}
\centering
\vspace{1mm}

\begingroup
\small
\setlength{\tabcolsep}{4.0pt}
\renewcommand{\arraystretch}{1.05}

\begin{tabular*}{\linewidth}
{@{\extracolsep{\fill}}ccccc@{}}
\toprule
\textbf{Question weight} &
\textbf{Motion weight} &
\textbf{Saliency weight} &
\textbf{Diversity weight} &
\textbf{Frame floor} \\
\midrule
0.8 & 0.6 & 0.4 & 0.4 & 12 \\
\midrule
\textbf{Preview frames} &
\textbf{Preview boost} &
\textbf{Uncertainty weight} &
\textbf{Temporal NMS gap} &
\textbf{DPP ratio} \\
\midrule
5 & 0.75 & 0.45 & 2 & 0.5 \\
\bottomrule
\end{tabular*}

\endgroup
\vspace{-4mm}
\end{table}
\subsection{Adjacent-Frame Correspondence Graph}
\label{app:selector_correspondence}

Correspondence edges are constructed only between adjacent frames. For candidates $i$ in frame $t$ and $j$ in frame $t+1$, define the cosine similarity
\begin{equation}
s_{ij}=\bar z_{ti}^{\top}\bar z_{t+1,j}.
\label{eq:selector_pair_similarity}
\end{equation}
A candidate pair is considered only when $j$ is the top-1 match of $i$ and $i$ is also the top-1 reverse match of $j$. Let $\delta_{i\rightarrow j}$ and $\delta_{j\rightarrow i}$ denote the corresponding top-1 minus top-2 similarity margins in the forward and reverse directions. We use
\begin{equation}
\Delta_{ij}=\min\{\delta_{i\rightarrow j},\delta_{j\rightarrow i}\}.
\label{eq:selector_pair_margin}
\end{equation}
Let $p_i,p_j\in[0,1]^2$ be normalized patch coordinates. Their normalized spatial distance is
\begin{equation}
d_{ij}=\frac{\|p_i-p_j\|_2}{\sqrt{2}}.
\label{eq:selector_spatial_distance}
\end{equation}
Let $s_t^{\mathrm{scene}}$ denote the adjacent-frame scene-similarity statistic used by the implementation. A mutual match is retained as an edge only if
\begin{equation}
s_{ij}\ge0.65,
\qquad
\Delta_{ij}\ge0.005,
\qquad
d_{ij}\le0.5,
\qquad
s_t^{\mathrm{scene}}\ge0.65.
\label{eq:selector_edge_filter}
\end{equation}
For retained edges, the matching confidence is
\begin{equation}
c_{ij}
=
\operatorname{clip}\!\left(\frac{s_{ij}-0.65}{0.35},0,1\right)
\operatorname{clip}\!\left(\frac{\Delta_{ij}}{0.05},0,1\right),
\label{eq:selector_match_confidence}
\end{equation}
and the state-change score is
\begin{equation}
h_{ij}
=
\max\!\left\{
\operatorname{clip}\!\left(\frac{d_{ij}}{0.25},0,1\right),
\operatorname{clip}\!\left(2[1-s_{ij}],0,1\right)
\right\}.
\label{eq:selector_state_change}
\end{equation}
The final correspondence reward is
\begin{equation}
w_{ij}=c_{ij}h_{ij}.
\label{eq:selector_edge_weight}
\end{equation}
This weighting favors mutually reliable correspondences whose endpoints also exhibit a measurable spatial or feature change.

\subsection{Implemented Selection Objective and Greedy Solver}
\label{app:selector_objective}

Let $S_t$ denote the selected candidates from frame $t$, and let $E$ be the correspondence graph defined above. The implementation-level objective is
\begin{equation}
J(S)
=
\sum_{t=1}^{T}
\log\det L_t[S_t,S_t]
+
\gamma
\sum_{\{i,j\}\in E}
w_{ij}\mathbf{1}[i,j\in S],
\qquad \gamma=2.0,
\label{eq:selector_exact_objective}
\end{equation}
subject to
\begin{equation}
|S_t|=b_t,
\qquad
\sum_t b_t=B.
\label{eq:selector_exact_constraints}
\end{equation}
The simpler selection expression in the main text should therefore be interpreted as design intuition; Eq.~\ref{eq:selector_exact_objective} is the objective implemented in our experiments.

For greedy evaluation, let $U_t$ be the candidates already selected in frame $t$. The DPP Schur-complement residual for adding candidate $i$ is
\begin{equation}
\rho_i^{\mathrm{DPP}}
=
\begin{cases}
L_t(i,i), & U_t=\varnothing,\\[2mm]
L_t(i,i)-L_t(i,U_t)L_t[U_t,U_t]^{-1}L_t(U_t,i), & \text{otherwise}.
\end{cases}
\label{eq:selector_dpp_residual}
\end{equation}
For numerical stability, the residual is floored at $10^{-14}$ before taking the logarithm. The singleton move score is
\begin{equation}
g_i
=
\log\max\{\rho_i^{\mathrm{DPP}},10^{-14}\}
+
\gamma
\sum_{j\in S:\{i,j\}\in E}w_{ij}.
\label{eq:selector_singleton_score}
\end{equation}
For a feasible correspondence edge $\{i,j\}\in E$, the pair score is
\begin{equation}
g_{ij}
=
\frac{g_i+g_j+\gamma w_{ij}}{2}.
\label{eq:selector_pair_score}
\end{equation}

\noindent\textbf{Implemented \texttt{GreedyMove}.} Given the current selected set $S$, one greedy iteration performs the following steps:
\begin{enumerate}[leftmargin=*,itemsep=0pt]
\vspace{-2mm}
    \item enumerate all quota-feasible unselected singletons and compute $g_i$ using Eq.~\ref{eq:selector_singleton_score};
    \item let $i^\star$ be the feasible singleton with the largest score;
    \item enumerate all quota-feasible correspondence edges $\{i,j\}\in E$ and compute $g_{ij}$ using Eq.~\ref{eq:selector_pair_score};
    \item let $\{u^\star,v^\star\}$ be the feasible pair with the largest pair score; if its score is strictly larger than $g_{i^\star}$, add both endpoints, otherwise add $i^\star$;
    \item repeat until all frame quotas are satisfied and $|S|=B$.
    \vspace{-2mm}
\end{enumerate}
The availability of feasible singleton moves guarantees completion whenever slots remain. The solver is greedy and makes no claim of globally maximizing Eq.~\ref{eq:selector_exact_objective}.

\begin{table}[t]
\caption{\textbf{Selection and refinement implementation constants.}}
\label{tab:selector_hparams}
\centering
\vspace{1mm}
\begingroup
\small
\setlength{\tabcolsep}{3pt}
\renewcommand{\arraystretch}{1.10}
\begin{tabular*}{\linewidth}{@{\extracolsep{\fill}}lr@{\hspace{2em}}lr@{}}
\toprule
\textbf{Parameter} & \textbf{Value} & \textbf{Parameter} & \textbf{Value} \\
\midrule
Relevance weight $\eta$ & 0.8 & Scene-similarity threshold & 0.65 \\
Minimum raw quality & $10^{-6}$ & Confidence margin scale & 0.05 \\
Diagonal-jitter coefficient & $10^{-3}$ & Spatial-change scale & 0.25 \\
Residual floor & $10^{-14}$ & Feature-change multiplier & 2.0 \\
Cosine threshold & 0.65 & Pair-reward weight $\gamma$ & 2.0 \\
Margin threshold & 0.005 & Context size $K$ & 8 \\
Maximum spatial distance & 0.5 & Neighbors per context frame & 4 \\
\bottomrule
\end{tabular*}
\endgroup
\vspace{-4mm}
\end{table}

\subsection{Temporal Context Retrieval}
\label{app:selector_context}

The temporal refiner uses a fixed context size of
\begin{equation}
K=8.
\label{eq:selector_context_k}
\end{equation}
For a selected anchor $i$ in an interior frame $t$, we retrieve the four most similar tokens by cosine similarity from the previous frame and the four most similar tokens from the next frame,
\begin{equation}
C(i)
=
\operatorname{Top4}_{t-1}(i)
\cup
\operatorname{Top4}_{t+1}(i).
\label{eq:selector_context_interior}
\end{equation}
For the first frame, the two context frames are frames 2 and 3; for the last frame, they are frames $T-1$ and $T-2$. Context is retrieved from the unchanged dense image pool and may therefore include tokens that were not selected into the $B$ output anchors.

For each anchor-context pair, the refiner receives the relative-position feature
\begin{equation}
\left[
\operatorname{sign}(\Delta t)\log(1+|\Delta t|),
\Delta x,
\Delta y
\right],
\label{eq:selector_relative_position}
\end{equation}
where $\Delta x$ and $\Delta y$ are normalized spatial offsets. The retrieved context is used only to update the retained anchors; it does not
increase the number of visual tokens passed to the language model.
Table~\ref{tab:selector_hparams} summarizes the remaining selection and
context-retrieval constants.

\end{document}